\documentclass[10pt,twocolumn,letterpaper]{article}

\usepackage{cvpr}              
\definecolor{cvprblue}{rgb}{0.21,0.49,0.74}
\usepackage[pagebackref,breaklinks,colorlinks,allcolors=cvprblue]{hyperref}
\usepackage{multirow}
\usepackage{relsize}
\usepackage{amsthm}
\usepackage{caption}
\usepackage[accsupp]{axessibility} 
\newtheorem{definition}{Definition}

\usepackage{amsmath}
\newcommand{\std}[1]{\scriptsize{$\pm$ #1}}

\def\paperID{18982} 
\def\confName{CVPR}
\def\confYear{2026}

\title{Rethinking Dataset Distillation: Hard Truths About Soft Labels}

 \author{Priyam Dey \thanks{Equal Contribution.\\ $^\dag$ Work done as a Pre-doctoral researcher at Vision and AI Lab.\\ Correspondence to Priyam Dey $<$priyamdey@iisc.ac.in$>$, Aditya Sahdev $<$adityasahdev.97@gmail.com$>$.}~~$^{1}$ \quad Aditya Sahdev \footnotemark[1]~~$^{1}$~$^\dag$ \quad Sunny Bhati $^{2}$~$^\dag$ \\ \quad Konda Reddy Mopuri$^{3}$ \quad Venkatesh Babu Radhakrishnan$^{1}$ \vspace{0.3em} \\
{\normalsize $^1$Vision and AI Lab, Indian Institute of Science, Bangalore} \\ \quad
\normalsize $^2$ University of Maryland, Baltimore County \quad
{\normalsize $^3$IIT Hyderabad} \quad \\
}

\usepackage{enumitem}
\begin{document}
\maketitle
\begin{abstract}
Despite the perceived success of large-scale dataset distillation (DD) methods, recent evidence \cite{qin2024a} finds that simple random image baselines perform on-par with state-of-the-art DD methods like SRe2L \cite{yin2024squeezerecoverrelabeldataset} due to the use of soft labels during downstream model training. This is in contrast with the findings in coreset literature, where high-quality coresets consistently outperform random subsets in the hard-label (HL) setting. To understand this discrepancy, we perform a detailed scalability analysis to examine the role of data quality under different label regimes, ranging from abundant soft labels (termed as SL+KD regime) to fixed soft labels (SL) and hard labels (HL). Our analysis reveals that high-quality coresets fail to convincingly outperform the random baseline in both SL and SL+KD regimes. In the SL+KD setting, performance further approaches near-optimal levels relative to the full dataset, regardless of subset size or quality, for a given compute budget. This performance saturation calls into question the widespread practice of using soft labels for model evaluation, where unlike the HL setting, subset quality has negligible influence. A subsequent systematic evaluation of five large-scale and four small-scale DD methods in the HL setting reveals that only RDED \cite{sun2024diversityrealismdistilleddataset} reliably outperforms random baselines on ImageNet-1K, but can still lag behind strong coreset methods due to its over-reliance on easy sample patches. Based on this, we introduce CAD-Prune, a compute-aware pruning metric that efficiently identifies samples of optimal difficulty for a given compute budget, and use it to develop CA2D, a compute-aligned DD method, outperforming current DD methods on ImageNet-1K at various IPC settings. Together, our findings uncover many insights into current DD research and establish useful tools to advance data-efficient learning for both coresets and DD.
\end{abstract}    
\section{Introduction}
\label{sec:intro}

\begin{figure*}
    \centering
    \includegraphics[width=\textwidth]{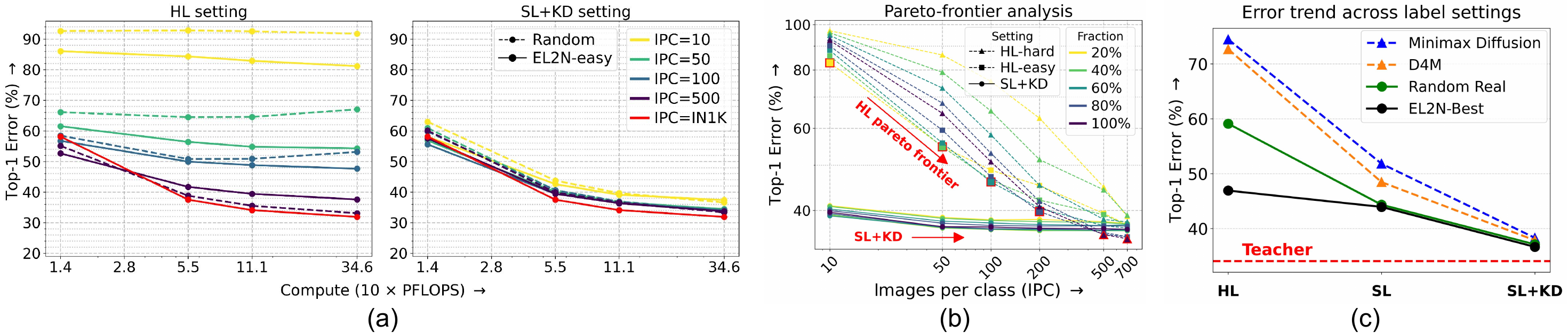}
    \caption{\textbf{Scalability analysis of various coresets and large-scale DD sets on ImageNet-1K in SL+KD regime.} \textbf{(Left)} Performance of coresets of varying quality (Random vs. EL2N-easy) and size (IPC 10–500+) across compute budgets equivalent to 2–50 epochs of full-dataset training. Unlike the HL setting, performance in SL+KD is dominated by compute, remains largely invariant to coreset quality and size, and quickly saturates to the near-optimal baseline of full dataset (red line of IPC=IN1K). \textbf{(Middle)} Pareto analysis \cite{sorscher2023neuralscalinglawsbeating} across data fractions shows \textit{no pareto-frontier} in the SL+KD regime, as all subsets achieve nearly identical accuracy across the IPC values. \textbf{(Right)} Error trends across label settings show that increasing the amount of soft-label supervision during student training drives all methods (coresets and DD) toward the teacher’s full-dataset performance regardless of their underlying quality.}
    \label{fig:motivation}
\end{figure*}

Recent success in dataset distillation (DD)\cite{wang2020datasetdistillation,sachdeva2023data} has been driven by methods \cite{yin2024squeezerecoverrelabeldataset,shao2024elucidating,du2024diversitydrivensynthesisenhancingdataset} that scale successfully to larger datasets like ImageNet-1K \cite{deng2009imagenet}. Much of this success stems from the widespread adoption of the SRe2L framework \cite{yin2024squeezerecoverrelabeldataset}, which decouples the costly synthesis step from the student model training. To achieve good downstream performance, the framework employs \textit{multiple soft labels} (one per augmented image) from a teacher to train the student model, a practice akin to knowledge distillation \cite{Hinton2015DistillingTK,beyer2022knowledge, gou2021knowledge} (we term this training setup as SL+KD). Recently, \citet{qin2024a} demonstrated that the primary factor behind the success of these methods stems from the use of these soft labels instead of the synthesized images, with randomly selected training images achieving performance comparable to then SOTA DD method SRe2L \cite{yin2024squeezerecoverrelabeldataset} with enough soft labels. Since then, several new state-of-the-art DD methods have been proposed including generative modeling-based approaches \cite{shao2024elucidating,sun2024diversityrealismdistilleddataset,su2024d4mdatasetdistillationdisentangled,gu2024efficient}, but they still struggle to convincingly beat the random subset baseline except in very low data budget regimes (see SL+KD panel in \cref{tab:main_in1k}).

This behavior stands in sharp contrast to results in coreset selection~\cite{guo2022deepcorecomprehensivelibrarycoreset,dyn_unc}, where SOTA approaches consistently outperform random baselines in the hard label (HL) setting by leveraging high-quality subsets, prompting a deeper investigation into the role of data quality in the presence of heavy soft labels. More broadly, we pose the following question: \textit{How does data quality, size and compute interplay in shaping performance behavior under the SL+KD regime?} We investigate this by analyzing the scaling behavior of high-quality coresets (such as EL2N~\cite{paul2021deep}) against random subsets in the SL+KD regime across different data and compute budgets, and find a stark difference in the results: Unlike the HL setting, where dataset size, quality and compute must be scaled together to achieve optimal performance \cite{kaplan2020scaling,goyal2024scaling}, 
performance in the SL+KD regime is \textit{primarily dictated by compute and remains largely unaffected by both dataset size and quality}.
Across diverse compute budgets, we find that on ImageNet-1K~\cite{deng2009imagenet}, using only 50–100 random training images per class (IPC) yields accuracy comparable to that of higher-quality or even bigger-sized coresets \textit{for the same compute budget} (see \cref{fig:motivation}(c) and the SL+KD plot in \cref{fig:motivation}(a)). More strikingly, \textit{for a fixed compute budget, all such subsets already reach near-optimal performance relative to the full dataset}, prompting the concerning question: \textit{Is there any scope of improvement to be made in dataset distillation by continuing to operate under the SL+KD regime?} 

Another common DD setting~\cite{guo2024losslessdatasetdistillationdifficultyaligned, cui2023scalingdatasetdistillationimagenet1k} that uses teacher supervision, but to a lesser extent (one soft label per image), is the \textit{fixed soft-label (SL) setting}. We extend our analysis of examining how quality, size, and compute shape performance to the SL setting as well, and observe a clear trend: while jointly scaling dataset size and compute remains important for performance, \textit{improvements in dataset quality offers only limited additional benefit} (see \cref{fig:sl_setting}(a)). A per-sample analysis using a modified EL2N score indicates that, unlike in HL setting, \textit{sample influence during the training phase in SL becomes highly uniform} (see \cref{fig:sl_setting}(b)), explaining the observed quality indifference in the scaling results. Therefore, given the limited role of data quality in both the soft-label regimes (SL and SL+KD), our study now emphasizes the HL setting, where data quality has been a primary driver of model performance \cite{sorscher2023neuralscalinglawsbeating}.

A systematic evaluation of five large-scale and four small-scale DD methods \textit{in the HL setting} finds that RDED \cite{sun2024diversityrealismdistilleddataset} is the only effective large-scale method, while TM and DATM achieve strong performance in the small-scale setting on TinyImageNet with ConvNet-D4, outperforming both random and coreset baselines across IPC budgets.
Given their strong performance, we evaluate whether methods such as TM and DATM can effectively scale to larger settings, like ResNet-18 on TinyImageNet. To do so, we turn our focus to their underlying distillation objectives and propose \textit{DCS (Distillation Correlation Score)}, an efficient metric that quickly assesses how well a given distillation objective aligns with downstream model generalization. It measures the correlation between the distillation loss and subsequently trained model's generalization error across a diverse collection of precomputed subsets that vary in quality and IPC, effectively bypassing the need for both dataset synthesis and downstream model training. Using DCS, we find that TM-based loss objectives \textit{do not scale effectively to larger models} like ResNet-18, where their loss objective values remain constant across various subsets despite varying performances of the underlying subsets used to calculate the loss objective (see \cref{fig:dcs_tm}). 

Given the lack of scalability of the trajectory-matching based methods, we turn our focus to the only performing method (RDED), which although outperform the random subset baseline across various IPC and compute budgets (see HL panel in \cref{tab:main_in1k}), 
it still doesn't surpass
stronger coreset methods like EL2N-Best \cite{paul2021deep,lee2024selmatcheffectivelyscalingdataset} consistently, potentially due to its simple underlying heuristic of selecting only most-confident sample patches from a larger set of images.
As noted in prior works of both coresets \cite{sorscher2023neuralscalinglawsbeating} and DD \cite{lee2024selmatcheffectivelyscalingdataset}, effective data subsets at any IPC budget should capture a diverse range of features with suitable difficulty levels aligned with the available compute (see optimal hardness plot of HL in \cref{fig:sl_setting}(c)). 
We therefore introduce an efficient and compute-aware pruning metric named \textit{CAD-Prune (Compute-Aware Difficulty Pruning)} designed to efficiently identify samples of appropriate difficulty for a given compute budget, and leverage this metric to develop an improved DD method called \textit{CA2D (Compute-Aware Dataset Distillation)}, which selects confident patches from examples of \textit{suitable difficulty at a given compute budget} to build an RDED-style synthetic set.
We show that CA2D effectively outperforms both RDED and the best coreset across various IPC settings on ImageNet-1K in the HL setting.
We summarize our contributions below:

\begin{itemize}[noitemsep, topsep=0pt, leftmargin=10pt]
    \item We conduct a systematic analysis of how dataset quality, size, and compute jointly affect performance across key label regimes in data-efficient learning. We show that: (a) In the SL+KD regime, \textit{performance is largely insensitive to dataset quality or size} and is instead governed by compute, \textit{approaching the near-optimal full-dataset accuracy} across different budgets; (b) in the fixed soft-label (SL) regime, dataset size still matters, but \textit{subset quality does not hold influence beyond very small IPC values}. We trace this to the tight clustering of difficulty scores during SL training, which effectively homogenizes sample quality; (c) in the HL setting, large-scale (RDED) and small-scale (TM, DATM) DD methods do outperform random baselines, with mixed results w.r.t best coresets.

    \item To analyze scalability of small-scale DD methods like TM and DATM, we propose \textbf{DCS (Distillation Correlation Score)}, a fast and effective metric that evaluates how well a distillation loss objective aligns with downstream generalization without requiring the expensive synthesis and training process. Using DCS, we find that trajectory-matching based synthesis objectives fail to scale to larger settings like ResNet-18 on TinyImageNet.
    \item We finally introduce \textbf{CAD-Prune}, a compute-aware pruning metric that generalizes reliably across various IPC and compute budgets, and use it to propose \textbf{CA2D}, a compute-aligned DD method built on the principle of compute-efficiency. We demonstrate state-of-the-art results on ImageNet-1K in HL setting, outperforming existing baselines across IPC settings.

\end{itemize}

\section{Preliminaries and Setup}
\label{sec:prelim}
We begin by introducing the notation used throughout this work. In a standard ML setup, we have a training dataset $\mathcal{D}_{train} = \{(x_i, y_i)\}_{i=1}^{|\mathcal{D}_{train}|}$, where 
each $(x_i, y_i)$ denote an image-label pair. A model parametrized by \(\theta\) is trained on \(\mathcal{D}_{train}\) by minimizing the empirical training loss $\ell_{\mathcal{D}_{train}}(\theta) = \mathbb{E}_{(x, y) \sim \mathcal{D}_{train}} (\ell(x, y, \theta))$. Thus, the optimal parameters are given by 
$\theta_{\mathcal{D}_{train}}^{*} = \arg\min_{\theta} \ell_{\mathcal{D}_{train}}(\theta)$. 

Building on this setup, we now consider the task of dataset distillation. 
Given a large training dataset $\mathcal{D}_{train}$, the goal is to \textit{synthesize} a compact set $\mathcal{S}$ that retains the essential information of $\mathcal{D}_{train}$ useful for various downstream tasks. We denote such an synthesized set as $\mathcal{S}^*$. When a model $\theta$ is trained on $\mathcal{S}^*$, it should achieve a minimal test loss $\ell_{\mathcal{D}_{test}}(\theta_{\mathcal{S}}^*)$, where $\ell_{\mathcal{D}_{test}}$ denotes loss obtained on a downstream testset by the model $\theta_{\mathcal{S}}^*$. That is, $\displaystyle \mathcal{S}^* = \arg\min_{\mathcal{S}} \ell_{\mathcal{D}_{test}}(\theta_{\mathcal{S}})$.
To obtain $\mathcal{S}^*$, different methods employ a surrogate distillation loss objective $\mathcal{L}_{\text{distill}}(\mathcal{S,D})$ which guides the optimization of the synthetic dataset $\mathcal{S}$ using a training algorithm $\phi$ on $\mathcal{D}$. Doing so implies an important assumption underlying this formulation: \textit{Minimization of} $\mathcal{L}_{\text{distill}}$ \textit{should yield a synthetic set} $\mathcal{\hat{S}}^*$ \textit{that, when used for training a model, should lead to good downstream generalization on a test set} $\mathcal{D}_{test}$. Therefore, every DD method aims to closely approximate $\mathcal{S}^*$ with $\mathcal{\hat{S}}^*$ obtained from their training method.
Given this background, we now dive deep into prevalent label regimes to understand the behavior of different data-efficient methods (DD / coresets) when operating under these regimes.  

\section{Label regimes of Data-Efficient Learning}
\label{sec:baselines}


\begin{table*}
\centering
\caption{\textbf{Performance comparison of large-scale DD methods with coreset selection methods on ImageNet-1K}. We compare hard label (HL), fixed soft label (SL) and cutmix-augmented soft labels (SL+KD) setting. Model architecture is ResNet-18. Best numbers are \textbf{bolded} within each method type (DD, coresets). Full dataset numbers are reported with the same compute used for the IPC setting. The substantial performance gap between methods (DD or coresets) closes when trained in SL+KD setting.}
\vspace{-0.2cm}
\setlength{\tabcolsep}{4pt}
\resizebox{\linewidth}{!}{
\label{tab:main_in1k}
\begin{tabular}{l|ccc|ccc|ccc}
\toprule
\multicolumn{1}{c|}{\multirow{2}{*}{\textbf{Method}}} & \multicolumn{3}{c|}{\textbf{Hard Label (HL) Setting}}                                       & \multicolumn{3}{c|}{\textbf{Fixed Soft Label (SL) Setting}}                                       & \multicolumn{3}{c}{\textbf{SL+KD Setting}}                                     \\
\cmidrule{2-10}
\multicolumn{1}{c|}{}                                 & \textbf{IPC 10}       & \textbf{IPC 50}       & \textbf{IPC 100}      & \textbf{IPC 10}       & \textbf{IPC 50}       & \textbf{IPC 100}      & \textbf{IPC 10}       & \textbf{IPC 50}       & \textbf{IPC 100}      \\
\midrule
\multicolumn{10}{c}{\textbf{Dataset Distillation (224 $\times$ 224)}} \\
\midrule
SRe2L~\cite{yin2024squeezerecoverrelabeldataset}                                                & 2.82 \std{0.11}           & 9.79 \std{0.08}           & 14.89 \std{0.20}          & 14.14 \std{0.19}          & 27.64 \std{0.09}          & 37.33 \std{0.07}          & 30.94 \std{0.24}          & 52.58 \std{0.03}          & 61.38 \std{0.15}          \\
DWA~\cite{du2024diversitydrivensynthesisenhancingdataset}                                                 & 2.61 \std{0.15}           & 8.26 \std{0.07}           & 10.22 \std{0.04}          & 14.21 \std{0.14}          & 28.42 \std{0.33}          & 34.73 \std{0.22}          & 32.72 \std{0.22}          & 53.25 \std{0.10}          & 57.74 \std{0.10}          \\
D4M~\cite{su2024d4mdatasetdistillationdisentangled}                                                  & 5.68 \std{0.21}           & 21.56 \std{0.17}          & 27.33 \std{0.23}          & 18.79 \std{0.16}          & \textbf{44.33 \std{0.13}}          & \textbf{51.53 \std{0.01}}          & \textbf{34.16 \std{0.54}}          & \textbf{58.08 \std{0.02}}          & \textbf{62.13 \std{0.01}}          \\
Minimax Diffusion~\cite{gu2024efficient}                                   & 6.32 \std{0.18}           & 19.62 \std{1.28}          & 25.54 \std{0.18}          & 17.97 \std{0.40}          & 40.30 \std{0.09}          & 48.18 \std{0.17}          & 33.88 \std{0.39}          & 57.77 \std{0.13}          & 61.68 \std{0.07}          \\
RDED~\cite{sun2024diversityrealismdistilleddataset}                                                 & \textbf{14.34 \std{0.17}} & \textbf{38.49 \std{0.41}}          & \textbf{44.36 \std{0.07}}          & \textbf{19.27 \std{0.46}}          & 38.78 \std{0.21}          & 44.91 \std{0.16}          & 32.10 \std{0.34}          & 54.42 \std{0.13}          & 58.61 \std{0.07}          \\
\midrule
\multicolumn{10}{c}{\textbf{Coreset Selection (224 $\times$ 224)}} \\
\midrule
Random Real                                          & 5.10 \std{0.04}           & 28.52 \std{0.23}          & 40.89 \std{0.04}          & 19.96 \std{0.08}          & 48.49 \std{0.14}          & 55.65 \std{0.04} & 28.82 \std{0.24}          & 58.10 \std{0.10}          & 62.87 \std{0.13}          \\
Cal~\cite{margatina2021active}                                                  & \textbf{11.58 \std{0.10}}          & 29.76 \std{0.19}          & 37.71 \std{0.04}          & 21.15 \std{0.13}          & 42.84 \std{0.13}          & 51.70 \std{0.03}          & \textbf{35.06 \std{0.22}} & 58.28 \std{0.08}          & 62.61 \std{0.03}          \\
GraphCut~\cite{pmlr-v132-iyer21a}                                             & 10.21 \std{0.21} & 31.88 \std{0.02}          & 41.77 \std{0.15}          & 20.06 \std{0.03}          & 45.50 \std{0.14}          & 53.55 \std{0.11}          & \textbf{35.58 \std{0.37}} & \textbf{59.22 \std{0.15}} & 63.05 \std{0.06}          \\
Forgetting - Hard~\cite{toneva2018an}                                 & 3.19 \std{0.09}           & 22.91 \std{0.31}          & 37.22 \std{0.19}          & 14.65 \std{0.02}          & 42.68 \std{0.13}          & 52.54 \std{0.03}          & 29.55 \std{0.31}          & 57.89 \std{0.03}          & 62.35 \std{0.04}          \\
Forgetting - Easy                                 & 4.12 \std{0.14}           & 17.65 \std{0.17}          & 26.69 \std{0.06}          & 21.63 \std{0.15}          & 46.40 \std{0.06}          & 54.04 \std{0.13}          & 34.26 \std{0.13}          & \textbf{59.06 \std{0.02}} & \textbf{63.23 \std{0.05}} \\
Dyn-Unc~\cite{dyn_unc} & \textbf{12.15 \std{0.22}}          & 33.03 \std{0.02}          & 42.01 \std{0.10}          & 20.94 \std{0.26}          & 45.54 \std{0.15}          & 53.40 \std{0.18}          & 30.39 \std{0.34}          & 58.22 \std{0.07}          & 62.41 \std{0.03} \\
EL2N - Hard~\cite{paul2021deep}                                       & 1.67 \std{0.06}           & 7.00 \std{0.18}           & 12.75 \std{0.09}          & 17.53 \std{0.15}          & 44.12 \std{0.24}          & 52.36 \std{0.04}          & 33.41 \std{0.18}          & 58.16 \std{0.02}          & 62.05 \std{0.06}          \\
EL2N - Best                                          & 10.33 \std{0.39}          & \textbf{38.06 \std{0.08}} & \textbf{47.18 \std{0.17}} & \textbf{22.82 \std{0.41}} & \textbf{48.76 \std{0.56}} & \textbf{56.05 \std{0.12}} & 33.62 \std{0.10}          & \textbf{59.16 \std{0.09}} & \textbf{63.32 \std{0.06}} \\

\midrule
Full Dataset                                                & 41.86      & 62.50 & 65.91          & 41.86      & 62.50 & 65.91          & 41.86      & 62.50 & 65.91          \\
\bottomrule
\end{tabular}}
\end{table*}

Although data-efficient learning broadly encompasses both coreset selection and dataset distillation, methods across these areas are typically evaluated\footnote{Note that by “evaluation”, we mean assessing the quality of a dataset by training a downstream student model on that dataset under a particular label regime, and reporting the final test accuracy using hard labels.} under disparate label supervision settings, making direct comparisons challenging. For instance, large-scale DD methods \cite{yin2024squeezerecoverrelabeldataset,shao2024elucidating,du2024diversitydrivensynthesisenhancingdataset,sun2024diversityrealismdistilleddataset} predominantly report results in the SL+KD regime, where multiple soft labels (one per augmented image) are used, while small-scale DD methods commonly adopt either the hard-label (HL) or fixed soft-label (SL) setting. In contrast, coreset methods evaluate exclusively in the HL setting. This disparity in label supervision also reflects in the highly varying model performances reported in the literature. Therefore, to understand the implications of these different label regimes, we conduct a systematic analysis of each, comparing the performance and scaling behavior of representative DD and coreset methods under each of this settings, and offer insights and recommendations based on our findings. 

For our analysis, we consider \textit{five} large-scale (SRe2L \cite{yin2024squeezerecoverrelabeldataset}, DWA \cite{du2024diversitydrivensynthesisenhancingdataset}, RDED \cite{sun2024diversityrealismdistilleddataset}, D4M \cite{su2024d4mdatasetdistillationdisentangled}, Minimax Diffusion \cite{gu2024efficient}) and \textit{four} small-scale (TM \cite{cazenavette2022datasetdistillationmatchingtraining}, DATM \cite{guo2024losslessdatasetdistillationdifficultyaligned}, DM \cite{zhao2022datasetcondensationdistributionmatching} and DC \cite{zhao2021datasetcondensationgradientmatching}) DD methods. Large-scale methods are evaluated on ImageNet-1K (IN1K) \cite{deng2009imagenet} at the resolution of 224 $\times$ 224 with ResNet-18 as the model architecture. Small-scale methods, on the other hand, are evaluated on TinyImageNet (64 $\times$ 64)~\cite{Le2015TinyIV} with ConvNet-D4 as the model architecture. As baselines, we include state-of-the-art coreset methods matched to the corresponding image resolution for each scale setting. 
Prior work in coreset literature \cite{sorscher2023neuralscalinglawsbeating} suggests that selection should be based on the compute or IPC budgets, with \cite{lee2024selmatcheffectivelyscalingdataset,paul2021deep} showing that carefully chosen subsets with appropriate difficulty level perform the best. 
Accordingly, to ensure fair comparisons, we report results for both the hardest and the optimally hard coresets (as determined by IPC-dependent selection~\cite{lee2024selmatcheffectivelyscalingdataset}) derived from the per-sample scores of these methods. Training and hyperparameter details are provided in the supplementary.

\subsection{Analysis of SL+KD setting: What matters?}

As mentioned earlier, large-scale DD methods have failed to outperform random subset baselines in the SL+KD regime (\citet{qin2024a} and SL+KD panel of \cref{tab:main_in1k}), contradicting results from the coreset literature of data-efficient learning, where high-quality coresets comprehensively outperform random baselines in the HL setting \cite{guo2022deepcorecomprehensivelibrarycoreset, dyn_unc}. To better understand this conflicting observation, we leverage the experimental setup of data-efficient learning from \citet{sorscher2023neuralscalinglawsbeating} and perform a scalability analysis of high-quality coresets, examining \textit{how data quality, size and compute interact to shape the performance scaling behavior} in this regime. For this, we employ EL2N \cite{paul2021deep}, a popular and effective coreset technique that identifies samples of appropriate difficulty using the moving average of their proposed scores \cite{paul2021deep}. Using this metric, we generate subsets of various IPC values ranging from $10$ to $700$ on IN1K\footnote{We restrict our analysis to a maximum IPC of $700$ on ImageNet-1K in order to maintain class-balance, which has been shown to a crucial role in obtaining optimal downstream performance \cite{sorscher2023neuralscalinglawsbeating}.}. Each subset is formed by retaining a fraction $f$ of a larger dataset based on EL2N scores, where $f$ varies from $100\%$ (equivalent to random selection) down to $20\%$, thereby spanning a broad spectrum of subset qualities. This enables us to construct both “easy” and “difficult” subsets in order to study how data quality influences performance across different scales. Given these subsets, we train models on them under four distinct compute budgets, ranging from $2$ to $50$ epochs of full dataset training, covering a sufficiently diverse range to make generalizable observations.

\begin{figure*}
    \centering
    \includegraphics[width=\textwidth]{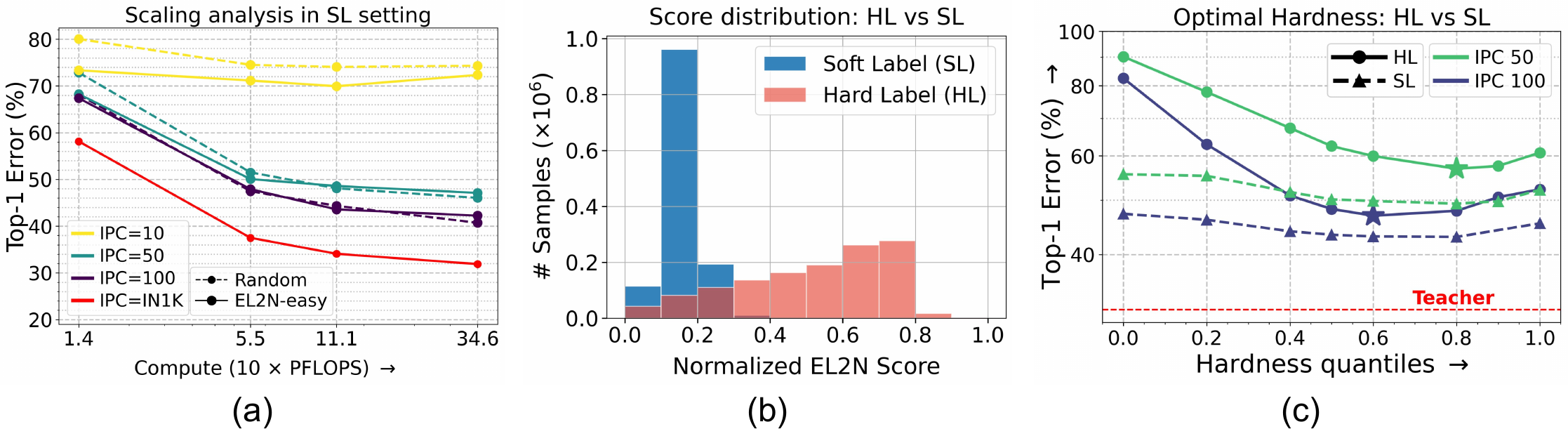}
    \caption{\textbf{Analysis of fixed soft label (SL) setting on ImageNet-1K.} \textbf{(Left)} Performance of coresets of varying quality and size across different compute budgets. While scaling dataset size and compute together remains essential for performance, dataset quality beyond a minimum IPC value play only a minor role as indicated by the convergence of EL2N-easy and random subsets. \textbf{(Middle)} Score distributions during training in SL setting cluster within the easy–mid difficulty range, showing that variations in underlying sample quality have limited effect when trained with fixed soft labels. \textbf{(Right)} Optimal hardness analysis also reveals that performance variations across sets in SL are far smaller compared to the HL setting, reinforcing the diminishing role of sample quality.}
    \label{fig:sl_setting}
\end{figure*}

\vspace{0.1cm}

\noindent \textbf{Results.} The analysis, shown in \cref{fig:motivation}, reveals surprising differences in the scaling behavior of high-quality coresets between the SL+KD regime and the standard HL setting. In the HL setting (see HL plot in \cref{fig:motivation}(a)), improving performance requires jointly scaling dataset size (IPC), its quality (Random vs EL2N-easy) and compute (x-axis). In contrast, under the SL+KD regime (see SL+KD plot in \cref{fig:motivation}(a)), performance is \textit{primarily dictated by compute}, while remaining \textit{largely unaffected by dataset size or quality} for a given compute budget\footnote{We clarify that the term “regardless of quality” does not imply that arbitrary sets (e.g., Gaussian noise) would suffice; it should still belong to the real-image distribution. Likewise, size of subsets beyond a very small IPC value (e.g., IPC 10) offer no additional gains with more compute.}. More strikingly, we observe that the performance level reached by all the subsets of varying quality and size \textit{is near-optimal to the performance of the full dataset for the same compute budget}~\cite{okanovic2024repeated} (shown by red line of IPC=IN1K in the plots). We empirically show the generalizability of these insights to other high-quality coresets beyond EL2N in \cref{tab:main_in1k} (see ``Coreset Selection'' results of the SL+KD panel). We note that such a performance saturation behavior is unique to the setting of SL+KD and therefore raises the question on the scope of progress to be made by continuing to operate in this regime.

\noindent \textbf{Pareto frontier analysis}~\cite{sorscher2023neuralscalinglawsbeating}, which captures the optimal trade-off between dataset size (IPC) and sample difficulty for maximizing performance, further reveals this stark behavioral difference between these two regimes (see \cref{fig:motivation}(b)). In the HL setting, pareto-frontier marked by red markers in the plot shows that for smaller IPC subsets, samples of low difficulty dominate the performance curve (red-bordered squares), gradually shifting towards harder samples at higher IPC (red-bordered triangles). This aligns with our prior understanding of the pareto-optimal trade-offs in data-efficient learning \cite{sorscher2023neuralscalinglawsbeating}. In contrast, under the SL+KD regime, we find that \textit{no such pareto-frontier exists}: beyond the minimum IPC of 10, the performance of all subsets, regardless of difficulty or size, remains largely invariant.

Finally, we also demonstrate this performance saturation behavior as a function of soft labels in \cref{fig:motivation}(c), where one can observe that, \textit{regardless of the quality of the underlying subset (coreset / DD)}, increasing amount of soft-labels in model training (HL $\rightarrow$ fixed soft-label (SL) $\rightarrow$ SL+KD) results in converging performance numbers close to the full dataset performance. 
These findings suggest that continued operating in the SL+KD regime offers limited value due to near-optimal performance saturation across subsets regardless of their underlying quality or size beyond a certain threshold (e.g. IPC 10). Consequently, we recommend eschewing from use of the SL+KD setting for assessing DD methods in favor of evaluation setups that better reveal differences in data quality and enable meaningful progress.

\subsection{The curious case of fixed soft-label (SL) setting}
\label{subsec:main_fixed_sl}
In the SL setting, the soft labels, once obtained from a teacher model on the original dataset, are frozen during the student model training. This setting has gained traction in recent DD methods (including small-scale ones), reporting state-of-the-art results using this label supervision scheme \cite{guo2024losslessdatasetdistillationdifficultyaligned,cui2023scalingdatasetdistillationimagenet1k,qin2024a}. Before we discuss the results of scalability analysis in the SL setting, we first revisit the notion of sample importance, as existing score formulations in coreset methods were originally developed for the HL setting. Given the central role of importance scores in the coreset formation, we first provide a formal treatment of this concept below.

\subsubsection{Sample importance in SL setting}
\citet{paul2021deep} show that the GraNd score (average gradient norm over training) of a sample $x$ can be used to estimate its overall influence on learning. They approximate it using the EL2N metric defined as $\mathbb{E}\,\|p(w_t, x) - y\|_2$, where the expectation is taken over model weights $w_t$ during training and $p$ is the softmax score of the model for the class $y$. However, this approximation assumes training with \textit{hard labels under the cross-entropy loss}. We extend the analysis to the SL setting under the assumption of KL-divergence loss and provide a formal definition of the importance score below.

\begin{definition}
\label{def_el2n_sl}
Let $q = (q_1, q_2, \ldots, q_C)$ be the given target probability distribution (the soft labels from a teacher) and let $T$ be the softmax temperature used for training. Then the importance score of a training sample $(x, y)$ with target probability distribution $q$ is defined as follows: 
\[
\text{EL2N-SL}(x) = \frac{1}{T}\,\mathbb{E}\|p(w_t, x) - q(w_t, x)\|_2
\]
\end{definition}

We defer the derivation to the supplementary material and use this formulation of EL2N-SL score to study the scaling behavior of coresets in the SL setting.

\textbf{Results.} Similar to the SL+KD regime, the scalability analysis of coresets in the SL setting is performed by varying the dataset size, quality, and compute and report the results in \cref{fig:sl_setting}(a). Results reveal a surprising trend: while scaling dataset size and compute together remains essential for optimal performance, dataset quality beyond a minimum IPC value (IPC 10) play only a minor role—as indicated by the convergence of EL2N-easy and random subsets. This observation is consistent for other high-quality coresets as well beyond EL2N, where subsets showing large performance gaps in the HL setting exhibit similar performance under the SL regime (see SL panel of \cref{tab:main_in1k}). 

To investigate why this behavior occurs even with \textit{fixed} soft labels, we first examine the modified EL2N-SL scores (\cref{def_el2n_sl}) for all the sample points and visualize their distribution during training in \cref{fig:sl_setting}(b). Interestingly, the per-sample score distributions differ markedly between the HL and SL settings: while in the HL case, samples are broadly spread across the EL2N-SL spectrum reflecting diverse learnability patterns, in the SL regime, \textit{they are tightly clustered, indicating that most samples contribute similarly to the learning process when trained with fixed soft labels}. This homogenization of sample difficulty is further evident in \cref{fig:sl_setting}(c): unlike the HL setting, where sample hardness substantially impacts performance depending on the score quantile (x-axis) and IPC setting, and where the optimal difficulty (marked by star) shifts towards left as one increases IPC \cite{lee2024selmatcheffectivelyscalingdataset,sorscher2023neuralscalinglawsbeating}, the variations in SL regime are much more muted across scores and IPC, reinforcing the diminishing role of sample quality in this regime as well.

\textbf{Results for small-scale methods.} As mentioned earlier, most small-scale DD methods train models using hard labels \cite{cazenavette2022datasetdistillationmatchingtraining,zhao2022datasetcondensationdistributionmatching}. However, when (fixed) soft labels are used for model training (SL), they are often \textit{jointly learned} with the images during optimization \cite{guo2024losslessdatasetdistillationdifficultyaligned,zhou2022datasetdistillationusingneural}, which can lead to inflated performance gains relative to baseline methods like coresets. Moreover, many-a-times results for baseline methods are still reported in the HL setting, further making the comparison unfair. Therefore, to ensure consistency and transparency, we replace the learned soft labels of small-scale DD sets with teacher-assigned soft labels, mirroring the setup used for baselines. This adjustment isolates the effect of label learning—which, as shown by \citet{qin2024a}, can also be applied to selection-based methods—from the intrinsic quality of the distilled data.


We present the results in \cref{tab:small_scale} (SL panel). While approaches like TM~\cite{cazenavette2022datasetdistillationmatchingtraining} and DATM~\cite{guo2024losslessdatasetdistillationdifficultyaligned} \textit{clearly outperform coreset baselines in the HL setting} for different tasks (in-domain as well as cross-arch. transfer), this advantage disappears when we shift to the SL regime, \textit{even at low IPC settings}. For instance, while DATM outperforms K-centers by $\sim\!8\%$ in the HL setting, it fails to do so in the SL setting. This suggests that, across small- and large-scale settings of dataset distillation and coresets, subset quality, which plays a major role in HL setting, has limited influence in the regime of fixed soft labels. 



\begin{figure*}
    \centering
    \includegraphics[width=\textwidth]{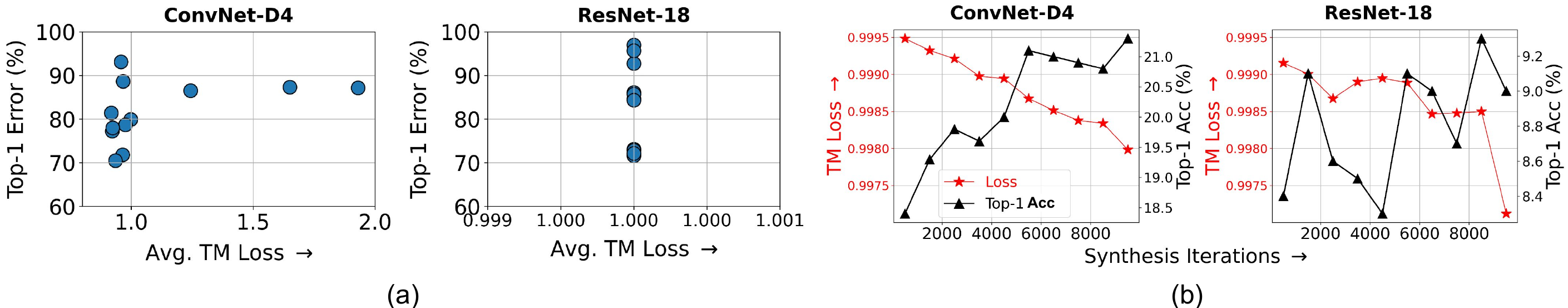}    \caption{\textbf{(Left)} Analysis of TM Loss objective behavior for different synthesis methods on TinyImageNet. Scatter plot displaying correlation of Avg. TM Loss with In-domain generalization of all the methods. Notice the complete lack of correlation when one evaluates the TM loss for larger architectures like RN-18, even though generalization performance varies significantly for the underlying distilled sets. \textbf{(Right)} Training dynamics of DATM synthesis on TinyImageNet. We track DATM synthesis for ConvNet-D4 (left) and ResNet-18 (right) at IPC 10. Note that despite synthesis of $10k$ iterations, both TM loss (red curve) and accuracy (black curve) show minimal change.}
    \label{fig:dcs_tm}
\end{figure*}

\begin{table}
\centering
\caption{\textbf{Performance comparison of small-scale DD methods with coresets on TinyImageNet in HL and SL setting}. Model architecture is ConvNet-D4. The substantial performance gap in the HL setting closes when trained with fixed soft labels.}
\vspace{-0.1cm}
\setlength{\tabcolsep}{3pt}
\resizebox{0.98\linewidth}{!}{
\label{tab:small_scale}
\begin{tabular}{l|cc|cc}
\toprule
\multicolumn{1}{c|}{\multirow{2}{*}{\textbf{Method}}} & \multicolumn{2}{c|}{\textbf{Hard Label (HL)}} & \multicolumn{2}{c}{\textbf{Fixed Soft Label (SL)}}                                   \\
\cmidrule{2-5}
\multicolumn{1}{c|}{}                                 & \textbf{IPC 10}    & \textbf{IPC 50}    & \multicolumn{1}{c}{\textbf{IPC 10}} & \multicolumn{1}{c}{\textbf{IPC 50}} \\
\midrule
\multicolumn{5}{c}{\textbf{Dataset Distillation}} \\
\midrule
DM  \cite{zhao2022datasetcondensationdistributionmatching}                                                 & 13.51 \std{0.31}    & 22.76 \std{0.28}    & 16.12 \std{0.33}                     & 36.58 \std{0.07}                     \\
DC  \cite{zhao2021datasetcondensationgradientmatching}                                                 & 12.83 \std{0.14}    & 12.66 \std{0.36}    & 7.23 \std{0.20}                      & 10.18 \std{0.32}                     \\
TM \cite{cazenavette2022datasetdistillationmatchingtraining}                                                  & \textbf{20.11 \std{0.16}}    & 28.16 \std{0.45}    & 26.11 \std{0.30}                     & 36.43 \std{0.08}                     \\
DATM  \cite{guo2024losslessdatasetdistillationdifficultyaligned}                                               & 19.26 \std{0.19}                 & \textbf{29.51 \std{0.37} }                & \textbf{28.43 \std{0.27}}                     & \textbf{37.39 \std{0.28}}                     \\
\midrule
\multicolumn{5}{c}{\textbf{Coreset Selection}} \\
\midrule
Random Real                                          & 6.88 \std{0.25}     & 18.62 \std{0.22}    & 26.07 \std{0.48}                     & 36.87 \std{0.17}                     \\
K-centers                                               & 11.38 \std{0.26}    & 22.02 \std{0.40}    & 27.11 \std{0.13}                     & \textbf{37.45 \std{0.21}}                    \\
\bottomrule
\end{tabular}}
\end{table}

\subsection{The hard label (HL) setting}
\label{subsec:fixed_hl} 
Given the limited importance of data quality in the soft label regimes of SL and SL+KD, we focus on the HL setting, where the role of data quality is well established~\cite{sorscher2023neuralscalinglawsbeating}. 
Despite this, recent large-scale DD methods have largely moved away from this label setting, often omitting hard-label results altogether from their evaluation. Therefore, a systematic evaluation of current state-of-the-art DD methods (both large- and small-scale) is warranted in the HL setting, which we report in \cref{tab:main_in1k} and \cref{tab:small_scale}, respectively. 

We observe that DD methods at both small- and large-scale outperform random subset baselines across IPC values, with RDED~\cite{sun2024diversityrealismdistilleddataset} being the only method to do so at large scale, and DM~\cite{zhao2022datasetcondensationdistributionmatching}, TM~\cite{cazenavette2022datasetdistillationmatchingtraining}, and DATM~\cite{guo2024losslessdatasetdistillationdifficultyaligned} at small scale. Among these, only RDED, TM, and DATM perform competitively with coreset methods beyond random subset selection. However, due to the limited scalability of TM and DATM, their evaluations are restricted to TinyImageNet with ConvNet-D4 as the model. Although they are not directly scalable to large-scale settings such as IN1K, their underlying distillation objectives can still be studied to assess their scaling behavior. To do this, we introduce a new evaluation framework in the following section that performs this analysis efficiently and in a zero-shot manner without the need for the costly synthesis process.

\section{Distillation Correlation Score (DCS)}

Recall from our preliminaries that a distillation method aims to optimize a surrogate loss objective $\mathcal{L}_{\text{distill}}(\mathcal{S}, \mathcal{D})$ which yields a distilled set $\mathcal{\hat{S}}^*$ closely approximating $\mathcal{S}^*$, the optimal synthetic set of the original dataset $\mathcal{D}$. Therefore, to assess the quality of a distilled set, one can instead analyze the generalization behavior of its underlying distillation objective $\mathcal{L}_{\text{distill}}$ used to synthesize it in the first place. Shifting the focus to the objective enables us to evaluate a wide-range of objective functions to assess their potential for a better distillation objective. To do this, we start with $m$ different sets \( \{S_1, \dots, S_m\} \), where $S_j$ can be either a coreset or a distilled set. For each set $S_j$, we train a model $\theta_{\mathcal{S}_{j}}^*$ and evaluate its loss on the downstream test set $\mathcal{D}_{test}$ ($\ell_{\mathcal{D}_{test}}(\theta_{\mathcal{S}_{j}}^*)$). We also compute its distillation loss objective $\mathcal{L}_{\text{distill}}(\mathcal{S}_{j},\mathcal{D})$. Using these two losses, we propose \textbf{Distillation Correlation Score (DCS)} for a given dataset distillation method $\phi$ on a downstream test set $\mathcal{D}_{test}$:
\[
\text{DCS}(\phi) = \mathlarger{\rho}\left( \left\{ \ell_{\mathcal{D}_{test}}(\theta_{\mathcal{S}_{j}}^*) \right\}_{j=1}^m, \Bigl\{ \mathcal{L}_{\text{distill}}(\mathcal{S}_{j},\mathcal{D}) \Bigl\}_{j=1}^m \right)
\]
where $\rho$ denotes the Spearman-rank correlation. The DCS score offers an efficient way to evaluate different distillation objectives: since the generalization errors for various sets $\mathcal{S}_j$ are computed only once by training a student model on them, they can be reused as a look-up table to compute correlation with loss values of any (existing or future) distillation objective $\mathcal{L}_{\text{distill}}$. Therefore, using DCS, we not only mitigate the compute-intensive process of generating distilled sets, but we can also use it to identify objectives which are e aligned with the downstream generalization, serving as better indicators of distillation quality.

\subsection{Loss Objective evaluation using DCS}
\label{subsec:eval_using_dcs}

Using DCS, we analyze the scaling behavior of two promising small-scale methods mentioned in the last section: TM \cite{cazenavette2022datasetdistillationmatchingtraining} and DATM \cite{guo2024losslessdatasetdistillationdifficultyaligned}, and assess whether scaling them would help improve generalization on larger models like ResNet(RN)-18/50. Briefly, TM synthesizes datasets by aligning optimization trajectories of models trained on real and synthetic data, and DATM extends this by constraining the matching range and jointly optimizing soft labels with the images. Owing to the shared mechanisms behind the workings of the methods, we therefore focus on their underlying synthesis objective and report the results in \cref{fig:dcs_tm}.

A concerning behavior of the TM objective emerges: We observe \textit{no correlation between the TM surrogate loss and generalization when using a larger architecture such as RN-18 for synthesis}. Although RN-18 models trained on different subsets exhibits considerable variation in performance (see Top-1 Error), their TM surrogate loss remains unchanged at approximately $0.999$ (see \cref{fig:dcs_tm}(a)). In contrast, we observe reasonable variations in the TM surrogate loss values for smaller architectures like ConvNet-D4 which is commonly employed in TM-based methods. This indicates that, for larger models such as RN-18, the TM objective fails to capture meaningful variations between subsets, which is essential for an effective synthesis. 

To further ascertain the above observations, we perform synthesis using the DATM method for TinyImageNet using both ConvNet-D4 and ResNet-18 architectures, and track its TM surrogate loss dynamics. From \cref{fig:dcs_tm}(b), we note that despite $10k$ synthesis iterations, one can hardly observe any significant change in both loss and accuracy when using RN-18 for synthesis.
This further confirms our observations that using TM-based loss objective for synthesis in larger models and datasets is not useful, as it's unable to capture any meaningful differences between subsets, which is essential for downstream generalization. We provide more results on DCS in the supplementary material demonstrating its robustness across different DD objectives, tasks (in-domain vs out-of-domain performance), and on other DD methods (small- and large-scale), observing a lack of any \textit{strong} correlation of the objectives with generalization.


\section{Proposed Method}
\label{sec:method}

Given the lack of scalability of TM-based objectives, we turn our focus to RDED~\cite{sun2024diversityrealismdistilleddataset} in the large-scale setting, which convincingly outperforms random selection across all IPC in the HL setting but fails to convincingly surpass best coreset methods, potentially due to the simple underlying heuristic of selecting easy sample patches to construct its distilled set. As noted in coreset literature \cite{paul2021deep,sorscher2023neuralscalinglawsbeating,lee2024selmatcheffectivelyscalingdataset}, sample difficulty plays a crucial role in determining downstream model performance in the HL setting. Prior works such as EL2N~\cite{paul2021deep} and Dyn-Unc~\cite{dyn_unc} estimate per-sample difficulty via training-time statistics such as prediction error or uncertainty, but these fixed scores fail to generalize across IPC budgets (see EL2N-Hardest and Dyn-Unc in \cref{tab:main_in1k}), as easy (hard) samples are preferable under low (high) IPC or compute budgets. To address this, \citet{lee2024selmatcheffectivelyscalingdataset} proposed a “sliding window” approach that searches over all difficulty ranges of scores to identify subsets optimal for a given IPC. While effective, this method is computationally prohibitive as it requires multiple full training runs for each candidate window e, rendering it impractical for large-scale datasets or higher IPC values.

We therefore propose \textbf{CAD-Prune}, a compute-aware dataset pruning metric which generalizes effectively across different IPC / compute budgets while remaining computationally efficient. Building on the insight from Dyn-Unc~\cite{dyn_unc} that score uncertainty is a more reliable indicator of sample importance than the mean, CAD-Prune estimates per-sample uncertainty by training a model on the full dataset using the same compute budget as the downstream task (e.g. 200 epochs of IPC 50 set). 
To obtain the difficulty estimates optimal for that compute budget, we average the uncertainty scores not over the full training run, but over a small window near the end of the compute-dictated training period. This captures the samples that are \textit{still being learned under that training budget}. Crucially, the compute-matched run must follow the same learning-rate schedule as the full run; otherwise, we fail to replicate which samples remain learnable at that stage. This also means that using early checkpoints from a full run is insufficient, as they miss samples that would only be learned during the low–learning-rate phase of the compute-limited training.

\begin{table}
\centering
\caption{\textbf{Performance comparison of the proposed method on ImageNet-1K in HL setting}. We compare the proposed compute-optimal coreset CAD-Prune against EL2N-Best obtained using the sliding-window approach of \citet{lee2024selmatcheffectivelyscalingdataset}, and the proposed DD method CA2D against RDED. Both the methods outperform their best counterparts on ImageNet-1K in HL setting.}
\vspace{-0.2cm}
\setlength{\tabcolsep}{8pt}
\resizebox{\linewidth}{!}{
\label{tab:method}
\begin{tabular}{lccc}
\toprule
\multicolumn{1}{c}{\textbf{Method}} & \textbf{IPC 10} & \textbf{IPC 50} & \textbf{IPC 100} \\
\midrule
\multicolumn{4}{c}{\textbf{Coreset Selection (224 $\times$ 224)}}                                             \\
\midrule
EL2N-Best                           & 10.53 \std{0.14}           & 38.44 \std{0.30}           & \textbf{47.76 \std{0.43}}            \\
CAD-Prune                           & \textbf{12.57 \std{0.03}}  & \textbf{40.21 \std{0.15}}  & \textbf{47.40 \std{0.20}}   \\
\midrule
\multicolumn{4}{c}{\textbf{Dataset Distillation (224 $\times$ 224)}}                                          \\
\midrule
RDED                                & 14.34 \std{0.17}          & 38.49 \std{0.41}           & 44.36 \std{0.07}           \\
CA2D                                & \textbf{15.25 \std{0.24}}  & \textbf{41.72 \std{0.40}}  & \textbf{46.32 \std{0.10}}              \\
\bottomrule
\end{tabular}}
\end{table}

We now formally define our proposed metric CAD-Prune. Given a model $\theta^k$ trained on the whole dataset at the $k$-th epoch, 
a sample's score (we consider EL2N score here) is denoted as $\mathbb{S}(y~|~x, \theta^k)$. The uncertainty is defined as the standard deviation of the scores 
in successive $J$ training epochs with models 
$\{\theta^k, \theta^{k+1}, \ldots, \theta^{k+J-1}\}$:
\begin{equation}
U_k(x) = 
\sqrt{
\frac{\sum_{j=0}^{J-1} 
    \left[ 
        \mathbb{S}(y|x, \theta^{k+j}) - \bar{\mathbb{S}} 
    \right]^2
}{
J - 1
}}
\end{equation}
where $\bar{\mathbb{S}} = \frac{1}{J}\sum_{j=0}^{J-1} \mathbb{S}(y|x, \theta^{k+j}),$ and $J$ is the range for calculating the deviation. These epoch-wise uncertainty scores are used to calculate the final per-sample CAD-Prune scores given by:
\begin{equation}
CAD(x) = 
\frac{
\sum_{k=K-J-W}^{K-J-1} U_k(x)
}{
W
}.
\end{equation} 
Here, $W$ is the small window size we average the uncertainty over.
We typically set $W=2$ and $J=6$ for IN1K.

Using CAD-Prune, we first identify samples that correspond to an appropriate difficulty level for a given compute budget. We then crop, select, and stitch patches in the RDED-style from this coreset to construct a distilled set tailored to that compute level. We refer this as \textbf{CA2D (Compute-Aware Dataset Distillation)}.

We show results on ImageNet-1K in the HL setting in \cref{tab:method}. Our adaptive compute-aware pruning method CAD-Prune outperforms the costly sliding-window baseline while being far more efficient. The sliding-window approach first computes EL2N scores over the full training run and then evaluates multiple window configurations, each requiring an extra training cycle. In contrast, our method yields budget-optimal scores with just a single compute-matched run. For example, for IPC 100 coreset selection, this delivers 2–4$\times$ compute savings despite not accounting for the overhead of generating scores in the first place from the full run. CA2D also outperforms RDED on ImageNet-1K, setting a new state-of-the-art in the HL setting.

\section{Conclusion}
\label{sec:conclusion}

In this work, we systematically study how dataset quality, size, and compute interact across key label regimes in data-efficient learning across both coresets and DD, showing that soft-label supervision substantially diminishes the importance of data quality and that multiple soft labels further reduce sensitivity to dataset size, with performance primarily driven by compute and approaching near-optimal full-data levels. In contrast, data quality still holds importance in the hard-label (HL) setting, and further analysis of DD methods under this setting using the proposed DCS metric reveals scalability limitations of TM-based DD methods to larger architectures and datasets. Finally, we propose CAD-Prune, a \textit{compute-aware} pruning metric, and leverage it to develop CA2D, a \textit{compute-aligned} DD method, both of which outperform respective prior coreset and DD baselines on ImageNet-1K across multiple IPC settings. 


\section{Future work and limitations}
Our findings carry broad implications. Since dataset size and quality have limited impact under extensive teacher supervision, this suggests that in model-stealing scenarios, a small dataset with repeated supervision could achieve near-optimal performance. Furthermore, although our study focuses primarily on image classification, future work could explore multimodal setups and tasks beyond classification. Finally, expressing non-optimizable DD objectives for evaluation under DCS requires care and is an open problem.

\vspace{-0.1cm}

\noindent\makebox[\linewidth]{\rule{\linewidth}{0.1pt}}

\noindent \textbf{Acknowledgment.} This work is supported by a grant from Google and Kotak IISc AI-ML Centre (KIAC).
 
{
    \small
    \bibliographystyle{ieeenat_fullname}
    \bibliography{main}
}

\clearpage
\newpage
\appendix
\section*{Appendix}


\section{Details on Methods and their loss objectives}
\label{sec:related}


\subsection{Large-scale methods}
\label{subsec:large_scale_method_desc}
Early dataset distillation methods  \cite{wang2020datasetdistillation, wang2022cafelearningcondensedataset, cazenavette2022datasetdistillationmatchingtraining, zhao2022datasetcondensationdistributionmatching,zhou2022datasetdistillationusingneural} relied on a bi-level optimization framework, where the distilled set is optimized in an outer loop while a model is trained on this distilled set in an inner loop. This process is computationally expensive, as it requires repeated model training for every update of the distilled dataset, making it particularly challenging to scale to larger settings. SRe2L \cite{yin2024squeezerecoverrelabeldataset} addressed this limitation by decoupling this optimization process of the distilled set from the model training. This approach eliminates the need for an inner loop during training, as the dataset is now synthesized by matching certain statistical properties of a pre-trained teacher model. This decoupling enables it to scale to larger datasets like ImageNet-1K \cite{deng2009imagenet} and beyond. Many subsequent works, like EDC \cite{shao2024elucidating}, DWA \cite{du2024diversitydrivensynthesisenhancingdataset}, G-VBSM \cite{shao2024generalizedlargescaledatacondensation}, etc. have adopted a similar approach by building on top of this framework. We briefly describe such techniques below for ease of reference and clarity:

\textbf{SRe2L} \cite{yin2024squeezerecoverrelabeldataset} performs distillation by optimizing two loss objectives: (1) the standard Cross-entropy loss, and (2) another loss which matches the BatchNorm-mean and BatchNorm-variance of the teacher model which is pre-trained on the original dataset. Following is the formulation for the latter part:
\begin{equation}
\begin{aligned}
    \mathcal{L}_{\text{BN-SRe2L}}(\tilde{x}) &=  
    \underbrace{\sum_{l} \left\| \mu_l(\tilde{x}) - \mathbf{BN}_l^{\text{RM}} \right\|_2}_{\text{BN-Mean loss}} \\ 
    &\quad +
    \underbrace{\sum_{l} \left\| \sigma^2_l(\tilde{x}) - \mathbf{BN}_l^{\text{RV}} \right\|_2}_{\text{BN-Var loss}}
\end{aligned}
\end{equation}
Here, \( l \) is the index of the Batch Normalization (BN) layer, \( \mu_l(\tilde{x}) \) and \( \sigma^2_l(\tilde{x}) \) represent the mean and variance of the model trained on distilled set, respectively. \( \mathbf{BN}_l^{\text{RM}} \) and \( \mathbf{BN}_l^{\text{RV}} \) denote the running mean and running variance statistics of the pre-trained model at the \( l \)-th layer. 


\textbf{DWA} \cite{du2024diversitydrivensynthesisenhancingdataset} (Directed Weight Adjustment) aims to address the diversity limitations of SRe2L by introducing \textit{weight perturbations} in the model during matching along with a separate hyperparameter $\boldsymbol{\lambda_{var}}$ for the variance matching loss term: 
\begin{equation}
\begin{aligned}
    \mathcal{L}_{\text{BN-DWA}}(\tilde{x}) &=  
    \text{BN-Mean loss}(\tilde{x}) \\
    &\quad + \boldsymbol{\lambda_{var}} \cdot
    \text{BN-Var loss}(\tilde{x})
\end{aligned}
\end{equation}

\textbf{RDED} \cite{sun2024diversityrealismdistilleddataset} creates a synthetic dataset by extracting and concatenating the most "impactful" patches (where the impact is measured using a scoring mechanism using the output of a pretrained teacher model) from all the images of a particular class. Therefore, even though RDED is considered a "synthesis" method by the research community, but in its core essence, it's a technique which groups together informative patches \textit{of real images} to construct a compact set.

\textbf{D4M} \cite{su2024d4mdatasetdistillationdisentangled} is a generative distribution matching method that utilizes a pre-trained Latent Diffusion Model (LDM) \cite{rombach2022high} to generate synthetic images. Multiple prototypes are created for each class using K-means clustering in the latent space, which are then denoised using the pre-trained LDM model before passing them through a pre-trained decoder to produce synthetic images.

    \textbf{Minimax Diffusion} \cite{gu2024efficient} incorporates diffusion-transformer (DiT) based generative models to create a distilled set. They do so by finetuning a pretrained DiT model separately for each subset of classes in order to learn their data distribution. This helps ammortize the generation cost for higher IPCs by simply performing inference with the finetuned model multiple times for class-conditional generation. Their loss objective involves three loss terms: 
    
        (1) Standard \textit{diffusion loss} $\mathcal{L}_{\text{simple}} = || \epsilon_{\theta}(\boldsymbol{\mathrm{z_t}},~\boldsymbol{\mathrm{c}}) - \epsilon||^2_2$ of predicting the ground truth noise $\epsilon$ using the noise prediction network $\epsilon_{\theta}$ at a time step $\text{\textbf{t}}$; 
    
    (2) a \textit{minimax loss} $\mathcal{L}_r = \text{arg}\max_{\theta} \min_{m ~\in ~[N_m]} \cos (\boldsymbol{\mathrm{\hat{z}}}_{\theta}(\boldsymbol{\mathrm{z_t}},~\boldsymbol{\mathrm{c}}), \boldsymbol{\mathrm{z}}_m)$, which pulls close together the predicted embedding $\boldsymbol{\mathrm{\hat{z}}}_{\theta}$ to the \textit{least} similar sample $\boldsymbol{\mathrm{z}}_m$ stored in the memory bank $\mathcal{M} = \{\boldsymbol{\mathrm{z}}_m\}_{m=1}^{N_m}$ to ensure representative alignment; and 
    
    (3) a \textit{sample diversity loss} $\mathcal{L}_d = \text{arg}\min_{\theta} \max_{d ~\in ~[N_D]} \cos (\boldsymbol{\mathrm{\hat{z}}}_{\theta}(\boldsymbol{\mathrm{z_t}},~\boldsymbol{\mathrm{c}}), \boldsymbol{\mathrm{z}}_d)$, which has an opposite optimization target compared to the minimax loss $\mathcal{L}_r$, in where the predicted embedding $\boldsymbol{\mathrm{\hat{z}}}_{\theta}$ is pushed away from the \textit{most} similar one stored in another memory bank $\mathcal{D} = \{\boldsymbol{\mathrm{z}}_d\}_{d=1}^{N_D}$. 
    
    Thus, the overall loss term becomes: 
\begin{equation}
\mathcal{L}_{\text{Minimax}} = \mathcal{L}_{\text{simple}}~ + ~ \lambda_r~ \mathcal{L}_r ~+~ \lambda_d~\mathcal{L}_d
\end{equation}

where $\lambda_r$ and $\lambda_d$ are weighting hyper-parameters.

While these methods utilize various (complex) techniques for image synthesis itself, they share a common strategy of employing multiple soft labels for every augmented image for downstream training of a student model. These soft labels provide a more detailed supervisory signal to guide the student model's training and have been shown to significantly improve the downstream performance. Apart from this, there are generative-model based distillation methods like D4M \cite{su2024d4mdatasetdistillationdisentangled} and GLaD \cite{cazenavette2023glad} who perform distillation in the latent space and generates distilled images uisng pretrained decoders. For more details, we refer the reader to \citet{li2022awesome}.

\subsection{Small-scale DD methods}
\label{subsec:small-scale-related}
Small-scale dataset distillation methods synthesize compact datasets through a bi-level optimization framework to enable efficient model training with fewer samples. However, scaling these methods to larger datasets such as ImageNet-1K and to deeper architectures like ResNet-18 remains challenging due to their substantial computational cost. In general, methods in this regime fall into three categories: trajectory matching (TM \cite{cazenavette2022datasetdistillationmatchingtraining} and DATM \cite{guo2024losslessdatasetdistillationdifficultyaligned}) , distribution matching (DM \cite{zhao2022datasetcondensationdistributionmatching}), and gradient matching (DC \cite{zhao2021datasetcondensationgradientmatching}). We briefly describe these methods and their loss objectives  below:

\textbf{TM} \cite{cazenavette2022datasetdistillationmatchingtraining} aims to synthesize datasets by aligning the optimization trajectories of models trained on real and synthetic data. Specifically, given a real dataset $\mathcal{D}_{\text{train}}$ and a synthetic dataset $\mathcal{S}$, TM updates $\mathcal{S}$ to minimize a trajectory matching loss. This loss compares the model parameters obtained from training on $\mathcal{S}$ for $N$ steps to those obtained from training on $\mathcal{D}_{\text{train}}$ for $M$ steps, starting from a shared initialization point $\theta_t$:
\begin{equation}
\mathcal{L}_{TM}(\mathcal{S}, \mathcal{D}_{\text{train}}) = 
\frac{\left\| \hat{\theta}_{t+N} - \theta_{t+M} \right\|_2^2}{\left\| \theta_{t} - \theta_{t+M} \right\|_2^2}
\end{equation}
where $\theta_{t}$ denotes the model parameters at step $t$ obtained from training on the real dataset from random initialization, and $\hat{\theta}_{t+N}$ represents the parameters after training on the synthetic dataset for $N$ steps starting from $\theta_{t}$. This objective encourages synthetic data to induce training dynamics similar to that of real data over a fixed training horizon. TM also uses an upper bound $T^+$ on the sampling range of $t$, so that only model parameters within this range are used for matching. In the original work, TM uses 100 different expert trajectories for synthesis. These expert trajectories are obtained by training on the real dataset with a different random initialization and saving intermediate checkpoints along the way. 

\textbf{DATM} \cite{guo2024losslessdatasetdistillationdifficultyaligned} improves upon Trajectory Matching (TM) with two main modifications: (1) First, it sets both a lower bound $T^-$ and an upper bound $T^+$ on the sampling range for $t$, so that only parameters between these bounds are used for matching; and (2) second, it includes soft labels in the distilled data, which are also optimized during the distillation process.

\textbf{DM} \cite{zhao2022datasetcondensationdistributionmatching}
learns a distilled dataset by directly matching the output features of real and synthetic samples, where the features are extracted from a model with randomly initialized weights. The loss objective is to minimize the difference between the average feature representations of real and synthetic data under various augmentations. This is formalized as following:


\begin{equation}
\begin{aligned}
\min_{S} \ \mathbb{E}_{v \sim \mathcal{P}_v,\ \omega \sim \Omega} 
\Bigg\|
    \frac{1}{|T|} \sum_{i=1}^{|T|}
        \psi_v\!\left(A(x_i,\omega)\right) 
\\[-0.4em]
    -\ 
    \frac{1}{|S|} \sum_{i=1}^{|S|}
        \psi_v\!\left(A(\tilde{x}_i,\omega)\right)
\Bigg\|^{2}
\end{aligned}
\end{equation}

\begin{table*}
\centering
\caption{\textbf{Hyperparameter settings for evaluating performance of distilled set}. Below hyperparameters are used to train a student model on a distilled set to evaluate its generalization performance on downstream tasks. KL-Div: Kullback-Leibler Divergence; T: Temperature of KL-Div loss; RRC: Random Resized Crop, HFlip: Random Horizontal Flip, AdamW: Adam optimizer with decoupled Weight decay, Cosine: Cosine annealing; DSA: Differentiable Siamese Augmentation.}
\vspace{0.1cm}
\setlength{\tabcolsep}{6pt}
\resizebox{\textwidth}{!}{
\label{tab:stage3_hyper}
\begin{tabular}{l|p{5cm}p{5cm}|p{2.5cm}p{2.5cm}}
\toprule
\multicolumn{1}{c|}{\multirow{2}{*}{\textbf{Hyperparameter}}} & \multicolumn{2}{c|}{\textbf{Large-scale (ImageNet-1K)}}      & \multicolumn{2}{c}{\textbf{Small-scale (Tiny-IN / CIFAR-100)}}                                   \\
\cmidrule{2-5}
\multicolumn{1}{c|}{}                                         & \multicolumn{1}{l}{\textbf{Hard Label (HL)}} & \multicolumn{1}{l|}{\textbf{Soft Label (SL / SL+KD)}}          & \multicolumn{1}{l}{\textbf{HL setting}} & \multicolumn{1}{l}{\textbf{SL setting}} \\
\midrule
Epochs                                                       & 200                  & 200                            &    300                                       &   300                                     \\
Loss function & Cross-Entropy (CE) & KL-Div ($T$=$20$) & Cross-Entropy (CE) & KL-Div ($T$=$20$) \\
Optimizer                                                    & SGD                & AdamW                          &   SGD                                        &   AdamW                                     \\
Learning rate                                                & 0.5                 & 1e-3                           &  1e-2                                         &  1e-3                                      \\
Scheduler                                                    & Cosine               & Cosine                         &   StepLR@151                                        & Cosine                                       \\
Batch size                                                   & 50 (IPC 10), 128 (IPC 50), 200 (IPC 100)                  & 128                            &      256                                     &   256                                     \\
Augmentations                                                & RRC + HFlip (+ PatchShuffle aug. for RDED-type sets) & RRC + HFlip (+ PatchShuffle aug. for RDED-type sets) (+ Cutmix aug. for SL+KD) &  DSA                                          &  DSA                                      \\
Other details                                                & --                   & Warm-up of 5 epochs                  &  --                                          &  -- \\
\bottomrule
\end{tabular}}
\end{table*}

where $\psi_v$ is a  function mapping inputs to a randomly initialized model's feature space, $\omega$ is a data augmentation parameter sampled from a distribution $\Omega$, and $A(\cdot, \omega)$ denotes the augmented version of an input. $T$ is the set of real samples, and $S$ is the synthetic dataset being optimized.

\textbf{DC} \cite{zhao2021datasetcondensationgradientmatching} matches the gradients of the loss function computed on real and synthetic data with respect to the network parameters $\theta$. The synthetic dataset $S$ and the model parameters $\theta$ are optimized in an alternating manner, typically under a bilevel optimization framework. The DC loss objective is formulated as follows:



\begin{equation}
\begin{aligned}
S^{*} 
= \arg\min_{S} \ 
    \mathop{\mathbb{E}}\limits_{\theta_{0} \sim \mathcal{P}_{\theta_{0}}}
    \Bigg[
        \sum_{t=0}^{T-1}
        \mathcal{D}\!\left(
            \nabla_{\theta}\mathcal{L}_{S}(\theta_{t}),
            \nabla_{\theta}\mathcal{L}_{T}(\theta_{t})
        \right)
    \Bigg]
\\[0.3em]
\text{subject to} \qquad
\theta_{t+1}
    = \operatorname{opt\_alg}_{\theta}
        \!\left(\mathcal{L}_{S}(\theta_{t}),
        \varsigma_{\theta},\,
        \eta_{\theta}\right).
\end{aligned}
\end{equation}

where $\mathcal{P}_{\theta_0}$ is the distribution over network initializations, $T$ is the number of outer-loop iterations used to update the synthetic data, $\varsigma_\theta$ is the number of inner-loop optimization steps, $\eta_\theta$ is the learning rate for the model parameters, and $\mathcal{D}(\cdot, \cdot)$ measures the distance between real and synthetic gradients. 

\subsection{Coreset selection}
Similar to dataset distillation, coreset selection aims to identify a small, informative subset of the original dataset that, when used for training, achieves performance comparable to training on the full dataset. Broadly, coreset methods can be grouped into the following categories:

1) \textit{Diversity-Based Selection}: These methods aim to select subsets that span the data distribution well, encouraging representativeness and coverage of the input space. Example methods are Facility Location, K-Centers, etc. Facility Location uses pairwise similarity to select diverse examples that “cover” the dataset, whereas K-Centers selects points that are maximally distant from each other or represent cluster centers.


2) \textit{Importance or Influence-Based Selection}: These approaches estimate the influence or importance of each data point based on its effect on model training or generalization. For instance, Craig \cite{mirzasoleiman2020coresets} selects points to approximate the gradient of the full dataset. Glister \cite{killamsetty2021glistergeneralizationbaseddata}, on the other hand, uses bilevel optimization to select subsets that maximize validation performance.

3) \textit{Uncertainty and Informativeness-Based Selection}: These methods select examples that are likely to be the most informative or uncertain, often inspired by active learning. One example in this category is Entropy or Least Confidence, which selects examples with high predictive uncertainty. Another example is Margin Sampling, which picks points close to the decision boundary.

4) \textit{Learning Dynamics and score-based Selection}: Approaches in this category leverage model training behavior to identify informative or difficult examples. For example, Forgetting \cite{toneva2018an} tracks how often samples are forgotten during training, whereas GraNd / EL2N \cite{paul2021deep} ranks samples using the gradient / error norm of the loss over a few training epochs.


DeepCore \cite{guo2022deepcorecomprehensivelibrarycoreset} provides a comprehensive benchmarking of these methods across different datasets and tasks, highlighting their strengths and limitations in practical settings.

\section{Details on Training and Hyperparameters}
\label{sec:hparam_details}
We describe relevant hyperparameters for evaluation of both small-scale and large-scale DD methods in this section. Note that the hyperparameters for the teacher training and synthesis stage vary depending on the approach utilized. 
As we perform synthesis of any distilled set using their official codebase and recommended hyper-parameter settings (and use pretrained distilled sets when available), we mainly provide the details of the student training on distilled sets, which are summarized in \cref{tab:stage3_hyper}. EDC \cite{shao2024elucidating} explores the design space of large-scale dataset distillation methods in SL+KD regime and highlights the issue of sub-optimal hyper-parameter settings in training a student model on a distilled set. Subsequently, it proposes a series of hyperparameter changes to extract the best performance out of a synthesized dataset. For instance, it proposes optimal learning rates, batch sizes and scheduler which substantially improves student performance on various downstream tasks. Therefore, we adopt EDC's hyperparameter configuration of large-scale methods and use it in the SL / SL+KD setting to enable an optimal comparison of different methods. For the HL setting, we tune the optimal hyper-parameters for coresets as well as for the distilled sets around the settings taken from DeepCore \cite{guo2022deepcorecomprehensivelibrarycoreset}. Additionally, we tune for the training batch sizes as per RDED \cite{sun2024diversityrealismdistilleddataset}.


\noindent \textbf{DATM / TM hyperparameters.} We also specify hyperparameters for DATM / TM synthesis experiments on TinyImageNet for DCS. We use the DATM \cite{guo2024losslessdatasetdistillationdifficultyaligned} variant of TM loss for analysis, which sets a lower bound $T^-$ along with an upper bound $T^+$ on the sampling range for the starting checkpoint for matching trajectories. Recall from \cref{subsec:small-scale-related} of the supplementary that the TM loss has two additional hyperparameters $M$ and $N$. For our loss objective analysis experiments (DCS), we set $N=80$ and $M=2$. For synthesis, we use $100$ expert models, each trained for $50$ epochs. Remaining hyperparameters are summarized in \cref{tab:datm_dcs_hyperparams}.

 \begin{table}[]
\centering
\caption{\textbf{Hyperparameters used for DATM synthesis.}}
\label{tab:datm_dcs_hyperparams}
\begin{tabular}{lcccccccc}
\toprule
 \textit{N} & \textit{M} & \textit{$T^-$} & \textit{$T^+$} &  \begin{tabular}[c]{@{}c@{}}Synthetic\\ Batch Size\end{tabular} & \begin{tabular}[c]{@{}c@{}}Learning Rate\\ (Label / Pixels)\end{tabular} \\
 \midrule
 80         & 2          & 20             & 35                    & 250                                                                       & 10 / 100                                                                         \\
 \bottomrule
\end{tabular}
\end{table}
 


\section{Fitting scaling law equation for label regimes}
To quantify the interplay between dataset quality, size, and compute, we adopt the data-aware scaling law introduced by ~\citet{goyal2024scaling}. The predicted performance at epoch $k$ is modeled as
\begin{align}
&y_k = a \cdot n_1^{b_1} \prod_{j=2}^{k} \left( \frac{n_j}{n_{j-1}} \right)^{b_j} + d , \\
&b_{k+1} = b \cdot \left( \frac{1}{2} \right)^{\frac{k}{\tau}} = b \cdot \delta^{k}
\label{eq:scaling_law}
\end{align}
Here, $a$ is a normalizing factor; $b_j$ governs diminishing gains as more data is seen and also reflects the intrinsic utility of the data pool, with lower $b$ indicating higher utility; $\delta$ captures the decay in data utility with repetition; $d$ represents the irreducible loss that cannot be further reduced; and $\tau$ indicates how the utility of a sample decays slowly with repetition.

We fit this scaling law to performance curves of IPC 100 subsets of random and EL2N-Easy under two label regimes: SL+KD and HL. In the SL+KD regime, we find that both the subsets, which vary highly in their underlying quality,  exhibit nearly identical scaling behavior, with $b \approx 0.05$ and $\delta = 2.8947$. In contrast, the HL regime reveals a strong dependence on data quality: random subsets yield $b = -0.1859$ and $\delta = 5.2632$, while EL2N subsets exhibit $b = -0.1558$ and $\delta = 1.9474$, reflecting slower saturation of informative samples.

\textbf{Implication.} These scaling dynamics results indicate that the label regime of SL+KD, where abundant soft labels from a teacher model is used, the supervision nullifies the effect of data utility on downstream performance. This is in stark contrast to the hard-label (HL) regime, where the contribution of sample quality continues to strongly influence the model performance as evidenced by differing values of $b$ and $\delta$.

\section{Derivation of EL2N-SL Score}
\label{sec:proof-el2n-sl}


\newtheorem{assumption}{Assumption}
\newtheorem{lemma}{Lemma}

\begin{assumption}[Per-logit gradient orthogonality \cite{paul2021deep}]
\label{asm:orth}
For a model at time $t$ and input $x$, let $\psi_t^{(k)}(x)=\nabla_{w_t} f^{(k)}_t(x)$ be the parameter gradient of the $k$-th logit. We assume
\[
\langle \psi_t^{(k)}(x), \psi_t^{(j)}(x)\rangle \approx 0\quad (k\ne j),
\quad
\|\psi_t^{(k)}(x)\|_2 \approx c,\ \forall k,
\]
for some constant $c>0$ independent of $k$.
\end{assumption}

\begin{lemma}[] 
\label{lem:grand-el2n}
Let $p(w_t,x)$ be the temperature-$T$ softmax of logits $f(w_t,x)$,
\[
p_i=\frac{\exp(f_i/T)}{\sum_j\exp(f_j/T)},
\]
and let $q$ be a target (soft) distribution. For notation brevity, let $f_t := f(w_t, .)$. Under Assumption~\ref{asm:orth}, the GraNd score
\[
\chi_t(x,q)=\mathbb{E}\Bigg\Vert\sum_{k=1}^C \big(\nabla_{f^{(k)}}\ell(f_t(x), q)\big)^{\!T}\psi_t^{(k)}(x)\Bigg\Vert_2
\]
satisfies
\[
\chi_t(x,q)\approx \frac{c}{T}\,\mathbb{E}\|p(w_t,x)-q(w_t,x)\|_2,
\]
hence the EL2N-SL score may be defined (up to a constant) as
\[
\mathrm{EL2N\text{-}SL}(x)=\frac{1}{T}\,\mathbb{E}\|p(w_t,x)-q(w_t,x)\|_2.
\]
\end{lemma}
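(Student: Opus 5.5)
The plan is to compute the logit-gradient of the KL loss explicitly and then collapse the norm using Assumption~\ref{asm:orth}. We take the loss to be the temperature-scaled KL divergence
\[
\ell(f_t(x),q)=\mathrm{KL}\bigl(q\,\|\,p(w_t,x)\bigr)=\sum_i q_i\log q_i-\sum_i q_i\log p_i .
\]
Since $q$ is produced by the (frozen) teacher, it is constant with respect to the student logits $f$. Only the cross-entropy part $-\sum_i q_i\log p_i$ contributes to the gradient.

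\textbf{Step 1 (logit gradient).} Write $\log p_i = f_i/T-\log\sum_j e^{f_j/T}$. Then
\[
\partial_{f_k}\log p_i=\tfrac{1}{T}\bigl(\delta_{ik}-p_k\bigr).
\]
Using $\sum_i q_i=1$, this gives
\[
\nabla_{f^{(k)}}\ell=-\tfrac1T\sum_i q_i(\delta_{ik}-p_k)=\tfrac1T\,(p_k-q_k).
\]
If the implementation multiplies the loss by $T^2$, as is customary in distillation, the prefactor changes to $T$. In that case we will note that the statement holds up to that constant rescaling, in line with the ``up to a constant'' clause.

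\textbf{Step 2 (parameter gradient norm).} The chain rule gives the parameter gradient
\[
g_t(x)=\sum_k \tfrac1T(p_k-q_k)\,\psi_t^{(k)}(x).
\]
We expand $\|g_t\|_2^2$ as a double sum over $k,j$. By Assumption~\ref{asm:orth}, the cross terms $\langle\psi^{(k)},\psi^{(j)}\rangle$ for $k\neq j$ vanish approximately, and each diagonal term is approximately $c^2$. Hence
\[
\|g_t\|_2^2\approx \tfrac{c^2}{T^2}\sum_k(p_k-q_k)^2 ,
\]
and taking square roots gives $\|g_t\|_2\approx \tfrac{c}{T}\|p-q\|_2$.

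\textbf{Step 3 (expectation).} We take the expectation over the weights $w_t$ (and any training randomness), which produces $\chi_t(x,q)\approx \frac cT\,\mathbb E\|p(w_t,x)-q\|_2$. Dropping the constant $c$ is harmless for ranking samples, so this yields the definition of EL2N-SL.

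The main obstacle is making the ``$\approx$'' honest. The orthogonality error enters inside a square root, so we need to control it. Write the Gram matrix of the $\psi^{(k)}$ as $c^2 I+E$. Then
\[
\bigl|\|g\|^2-\tfrac{c^2}{T^2}\|p-q\|^2\bigr|\le \tfrac{1}{T^2}\|E\|_{op}\|p-q\|^2 .
\]
So the approximation is multiplicative with relative error of order $\|E\|_{op}/c^2$, and it survives the square root and the expectation. A second, smaller point concerns the notation $q(w_t,x)$: it should be read as the fixed teacher distribution $q(x)$ rather than something depending on $w_t$. Otherwise Step 1 would acquire extra terms from differentiating $q$.
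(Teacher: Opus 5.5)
Your proposal is correct and follows essentially the same route as the paper's proof. Both differentiate the temperature-scaled softmax to get the KL logit-gradient $\frac{1}{T}(p_k-q_k)$, substitute it into the GraNd norm, and use the orthogonality/equal-norm assumption to collapse $\|\sum_k (p_k-q_k)\psi_t^{(k)}\|_2^2$ to $c^2\|p-q\|_2^2$ before taking square roots and expectations; your extra remarks (the $T^2$-rescaling caveat, the Gram-matrix bound making the approximation multiplicative, and reading $q$ as the fixed teacher distribution) are sound refinements that the paper leaves implicit.
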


\begin{proof}
Differentiate the temperature-scaled softmax to get
\[
\nabla_{f_k} p_i = \frac{1}{T} p_i(\delta_{ik}-p_k).
\]
Thus, for KL$(q\|p)$ loss, the per-logit gradient is
\[
\nabla_{f^{(k)}}\ell(f_t(x), q) \;=\; \frac{1}{T}~\big(p_k - q_k\big),
\]

Substitute into the GraNd definition $\chi_t(x,q)$,
\begin{align*}
\chi_t(x,q)
&= \mathbb{E}\Bigg\Vert \sum_{k=1}^C \frac{1}{T~} (p_k-q_k)\,\psi_t^{(k)}(x)\Bigg\Vert_2 \\
&= \frac{1}{T}\,\mathbb{E}\Bigg\Vert \sum_{k=1}^C (p_k-q_k)\,\psi_t^{(k)}(x)\Bigg\Vert_2,
\end{align*}

Under Assumption~\ref{asm:orth}, the cross-logit gradient terms vanish and as each $\|\psi_t^{(k)}\|_2\!\approx\!c$, therefore
\begin{align*}
\Big\Vert \sum_k (p_k-q_k) \psi_t^{(k)}\Big\Vert_2^2
&= \sum_k (p_k-q_k)^2 \|\psi_t^{(k)}\|_2^2 \\
&\approx c^2\sum_k (p_k-q_k)^2
= c^2\|p-q\|_2^2.
\end{align*}
Taking square-root and expectation gives the claim:
\[
\chi_t(x,q)= \frac{c}{T}\,\mathbb{E}\|p-q\|_2.
\]
Absorbing the constant $c$ into normalization yields the stated EL2N-SL definition.
\end{proof}

\section{Role of teacher strength in SL+KD saturation}
In order to ascertain the role of teacher strength in the SL+KD saturation results observed in the main paper, we take two additional teachers: (a) a MobileNet-V2 (MNV2) model and (b) a weaker RN18 teacher and use them to obtains soft labels for the the student model (ResNet-18). The results, shown in \cref{fig:slkd_diff_teachers}, demonstrate that our key observation still holds in the SL+KD regime across teachers of different strengths, i.e., a saturation in model performance across subsets of varying data quality and size.

\begin{figure}
    \centering
    \includegraphics[width=\linewidth]{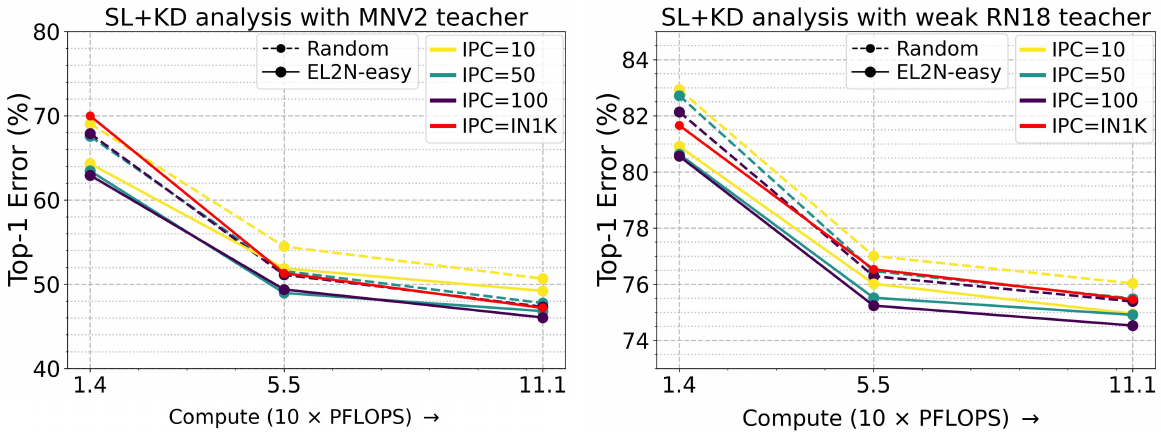}    \caption{\textbf{Role of teacher strength in SL+KD saturation.} We perform scaling analysis in the SL+KD regime with different teachers: (a) MobileNet-V2 (MNV2) teacher, and (b) a \textit{weak} ResNet-18 (RN18) teacher, in order to ascertain the robustness of our observations in the SL+KD regime with varying strength of the teacher models. From the results, we find that, indeed, our observations do hold strong despite varying teacher signals: that \textit{despite varying data quality and size, performance saturation is inevitable given a fixed compute budget}.}
    \label{fig:slkd_diff_teachers}
\end{figure}

\section{Additional analysis for small-scale methods}
\subsection{CIFAR-100 Evaluation}
In this section, we report the results of small-scale DD methods on CIFAR-100 dataset under the fixed HL and SL settings. The three methods considered in this evaluation\footnote{We exclude DATM \cite{guo2024losslessdatasetdistillationdifficultyaligned} from this evaluation due to a significant discrepancy between the performance we observed (specifically, substantially lower accuracy) when running the official DATM code on their provided distilled CIFAR-100 sets and the results reported in their paper. This issue has also been identified by other researchers, for instance, \url{https://github.com/NUS-HPC-AI-Lab/DATM/issues/17}. In contrast, the results for DATM on TinyImageNet are consistent with the reported numbers in their paper. Therefore, we report DATM results only for TinyImageNet.} are DC \cite{zhao2021datasetcondensationgradientmatching}, DM \cite{zhao2022datasetcondensationdistributionmatching} and TM \cite{cazenavette2022datasetdistillationmatchingtraining}. We baseline all the considered small-scale methods against standard coreset techniques (K-Centers \cite{guo2022deepcorecomprehensivelibrarycoreset} and random real subset), which consists of real images and do not involve any costly synthesis process. We adopt the standard setup provided by DCBench \cite{cui2022dcbenchdatasetcondensationbenchmark} for our evaluation. The model architecture is ConvNet-D3, and we compare performance for both IPC 10 and IPC 50. The results are summarized in \cref{tab:small_scale_c100} . The hyperparameters details are provided in \cref{sec:hparam_details} of the supplementary. 

While TM~\cite{cazenavette2022datasetdistillationmatchingtraining} exhibits a \textit{clear advantage over coreset baselines in the HL setting on CIFAR-100}, this advantage substantially diminishes once we transition to the SL regime---\textit{even at low IPC}. For example, although TM exceeds K-Centers by approximately $8$--$13\%$ under HL supervision, it no longer maintains this margin when soft labels are fixed. These results reinforce the pattern observed in \cref{subsec:main_fixed_sl} of the main paper: \textit{subset quality, which is highly influential in the HL setting, plays a far more limited role in the SL regime}, even on a different dataset such as CIFAR-100.

\begin{table}
\centering
\caption{\textbf{Performance comparison of small-scale DD methods with coresets on CIFAR-100 in HL and SL setting}. Model architecture is ConvNet-D3. The substantial performance gap in the HL setting closes when trained with fixed soft labels.}
\setlength{\tabcolsep}{3pt}
\resizebox{0.98\linewidth}{!}{
\label{tab:small_scale_c100}
\begin{tabular}{l|cc|cc}
\toprule
\multicolumn{1}{c|}{\multirow{2}{*}{\textbf{Method}}} & \multicolumn{2}{c|}{\textbf{Hard Label (HL)}} & \multicolumn{2}{c}{\textbf{Fixed Soft Label (SL)}}                                   \\
\cmidrule{2-5}
\multicolumn{1}{c|}{}                                 & \textbf{IPC 10}    & \textbf{IPC 50}    & \multicolumn{1}{c}{\textbf{IPC 10}} & \multicolumn{1}{c}{\textbf{IPC 50}} \\
\midrule
\multicolumn{5}{c}{\textbf{Dataset Distillation}} \\
\midrule
DM  \cite{zhao2022datasetcondensationdistributionmatching}                                                 & 29.23 \std{0.26} & 42.32 \std{0.37}    & 26.13 \std{0.10} & 43.46 \std{0.18}                     \\
DC  \cite{zhao2021datasetcondensationgradientmatching}                                                 & 28.42 \std{0.29} & 30.56 \std{0.56}    & 23.54 \std{0.31} & 33.46 \std{0.38}                    \\
TM \cite{cazenavette2022datasetdistillationmatchingtraining}                                                  & \textbf{38.18 \std{0.42}} & \textbf{46.32 \std{0.26}}    & \textbf{37.60 \std{0.25}} & \textbf{46.26 \std{0.30}}                     \\
\midrule
\multicolumn{5}{c}{\textbf{Coreset Selection}} \\
\midrule
Random Real                                          & 18.64 \std{0.25} & 34.66 \std{0.41}    & 33.43 \std{0.18} & 45.39 \std{0.23}                    \\
K-centers                                               & 25.04 \std{0.30} & 38.64 \std{0.43}    & 34.70 \std{0.27} & \textbf{46.24 \std{0.12}}                    \\
\bottomrule
\end{tabular}}
\end{table}

\subsection{Cross-Arch. Transfer: TinyIN and CIFAR-100} Following the DCBench \cite{cui2022dcbenchdatasetcondensationbenchmark} setup, we evaluate three model architectures -- MLP (411K), ResNet-18 (11M) and ResNet-152 (60M), and report their average transfer accuracy under the sub-heading ``Avg. Transfer" in \cref{tab:small_scale_c100_transfer} and \cref{tab:small_scale_transfer}. When evaluated under the fixed soft-label (SL) setting, coreset methods like K-Centers perform competitively and on-par with state-of-the-art methods like TM and DATM.

\begin{table}
\centering
\caption{\textbf{CIFAR-100 Cross-Architecture Transfer performance comparison of small-scale DD methods with coresets in HL and SL setting}. Model architecture is ConvNet-D3. The substantial performance gap in the HL setting closes when trained with fixed soft labels.}
\setlength{\tabcolsep}{6pt}
\resizebox{0.98\linewidth}{!}{
\label{tab:small_scale_c100_transfer}
\begin{tabular}{l|cc|cc}
\toprule
\multicolumn{1}{c|}{\multirow{2}{*}{\textbf{Method}}} & \multicolumn{2}{c|}{\textbf{Avg. Transfer (HL)}} & \multicolumn{2}{c}{\textbf{Avg. Transfer (SL)}}                                   \\
\cmidrule{2-5}
\multicolumn{1}{c|}{}                                 & \textbf{IPC 10}    & \textbf{IPC 50}    & \multicolumn{1}{c}{\textbf{IPC 10}} & \multicolumn{1}{c}{\textbf{IPC 50}} \\
\midrule
\multicolumn{5}{c}{\textbf{Dataset Distillation}} \\
\midrule
DM  \cite{zhao2022datasetcondensationdistributionmatching}                                                 & 12.44 & 23.59    & 15.69 & 33.51                     \\
DC  \cite{zhao2021datasetcondensationgradientmatching}                                                 & 11.86 & 14.10    & 12.39 & 21.35                    \\
TM \cite{cazenavette2022datasetdistillationmatchingtraining}                                                  & \textbf{15.62} & \textbf{30.29}   & \textbf{23.64} & 36.36                    \\
\midrule
\multicolumn{5}{c}{\textbf{Coreset Selection}} \\
\midrule
Random Real                                          & 10.39 & 21.86    & 20.36 & 35.88                   \\
K-centers                                               & 14.41 & 24.86    & \textbf{22.99} & \textbf{37.03}                   \\
\bottomrule
\end{tabular}}
\end{table}

\section{Additional results on DCS}

\begin{figure*}
    \centering
    \includegraphics[width=0.8\linewidth]{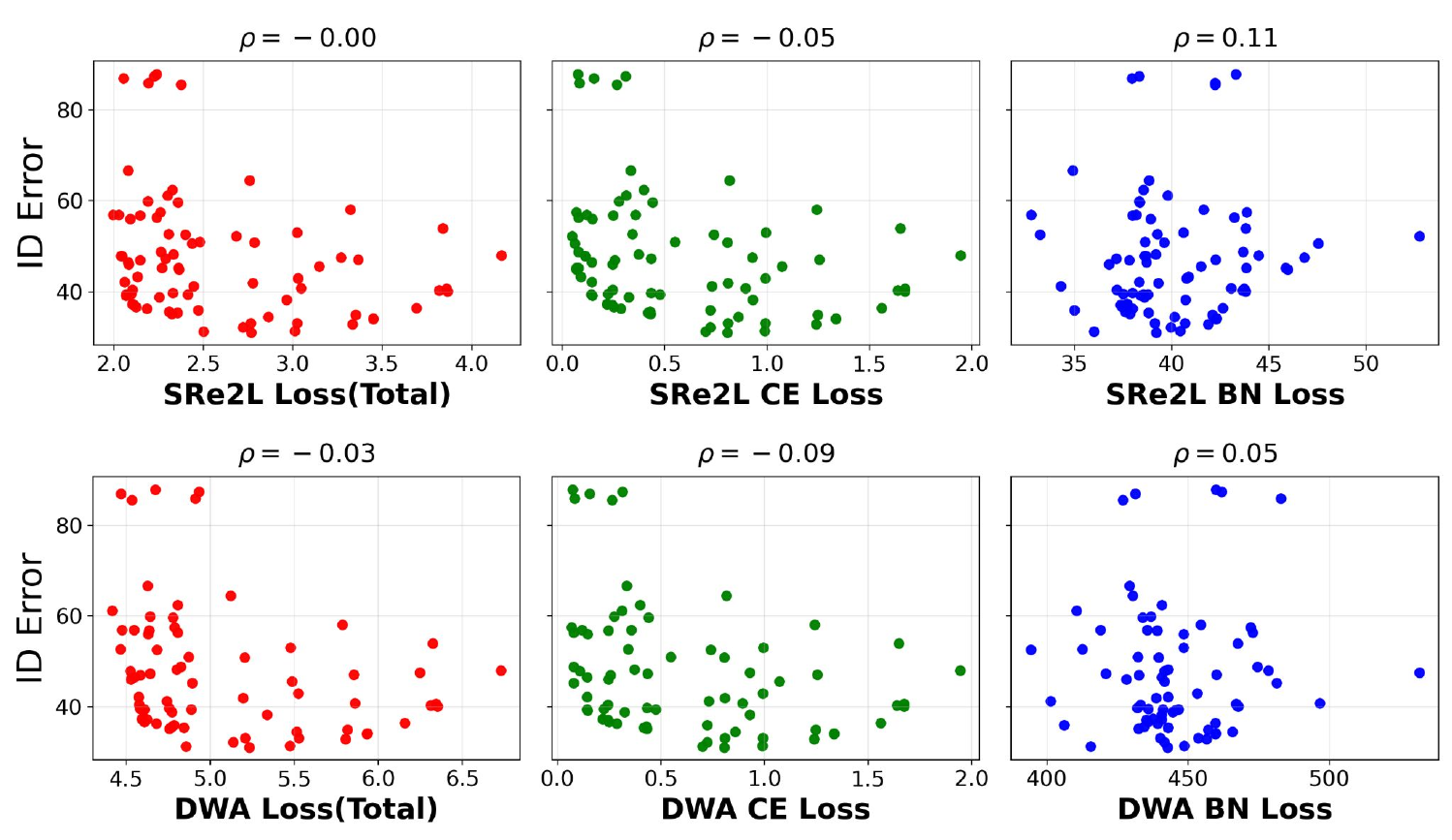}    \caption{\textbf{Correlation analysis of distillation loss objectives on ImageNet-1K}. We compute the proposed DCS score (see Sec-4 of the main paper) for SRe2L and DWA across multiple IPC settings and data subsets (each data point in the plot represents IPC-subset combination, see \cref{subsec:dcs_large_scale} for details and discussion). One can observe a mis-alignment between these distillation objectives and their generalization performance, with either zero or negative Spearman correlation after adjusting for the bias of size of subsets.}
    \label{fig:sre2l_loss_dcs}
\end{figure*}

\begin{figure}
    \centering
    \includegraphics[width=\linewidth]{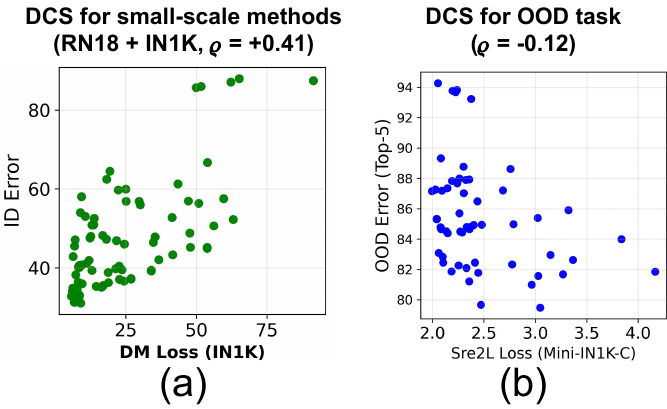}    \caption{\textbf{DCS Additional results}. \textbf{(a) DCS on small-scale method DM}. We use DCS to plot the correlation of DM \cite{zhao2022datasetcondensationdistributionmatching} loss objective with ID generalization error and find better-than-TM but modest correlation of $\rho = 0.41$. \textbf{(b) DCS robustness across tasks.} We also evaluate the robustness of DCS for an OOD task different than the ID task for SRe2L \cite{yin2024squeezerecoverrelabeldataset} on Mini-ImageNet-C dataset, and find that the DCS outputs a similar correlation score across both ID and OOD task ($\rho\sim-0.12$).}
    \label{fig:supple_dcs_dm}
\end{figure}

\begin{figure}
    \centering
    \includegraphics[width=\linewidth]{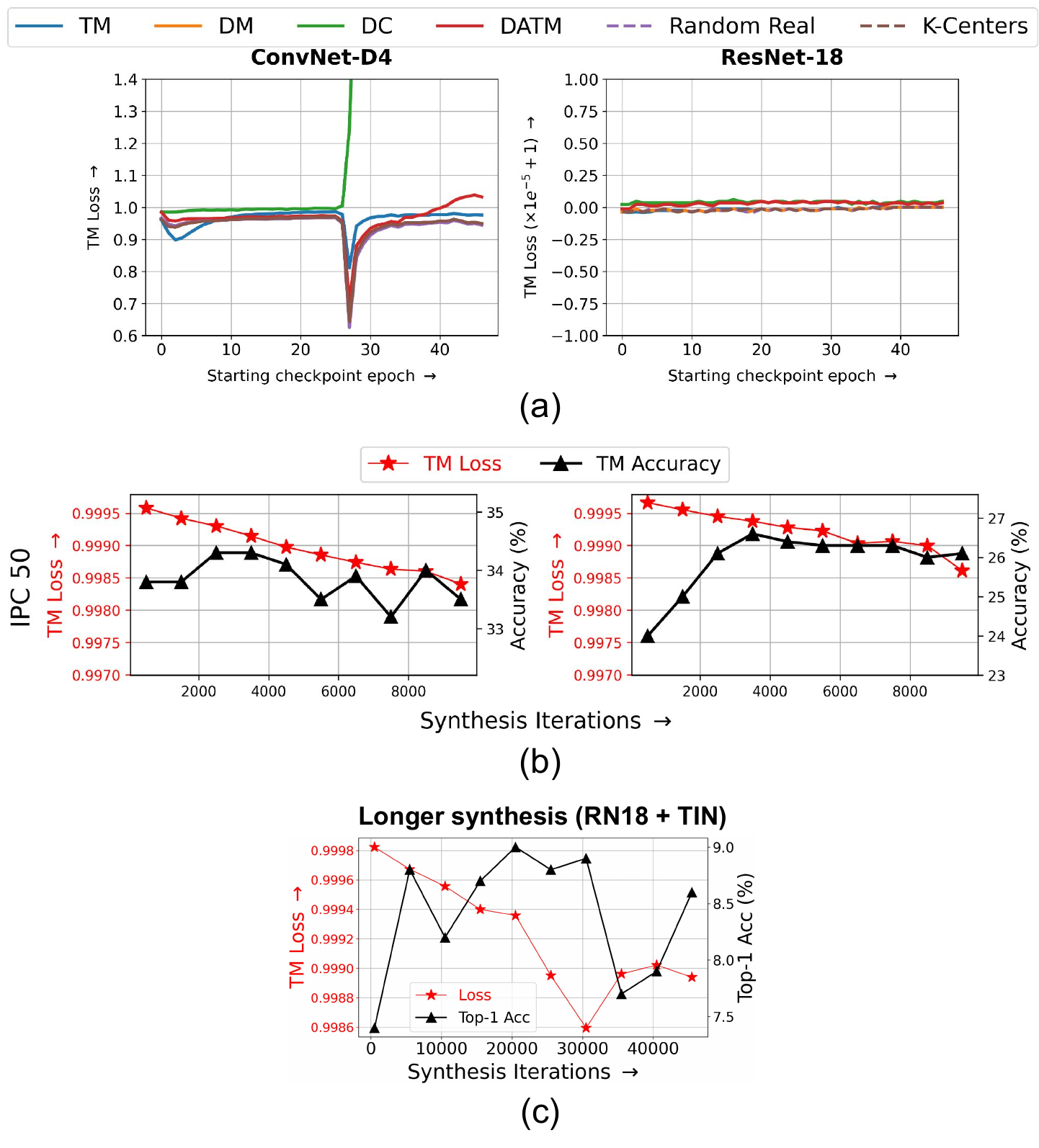}    \caption{\textbf{(a) Analysis of TM Loss objective behavior for different synthesis methods on TinyImageNet.} We calculate the TM Loss for various distilled sets with trajectories starting from different training epochs for both ConvNet-D4 and ResNet-18 model. For deeper and larger architectures like ResNet-18, TM Loss fails to capture any meaningful variation, settling around a constant value of $\sim0.806$ regardless of the method considered. Note that DC loss suddenly shoots up around the dip point (epoch $27-28$) for ConvNet-D4, and hence we clip the loss range to display meaningful variations. \textbf{(b) Training dynamics of DATM synthesis on TinyImageNet.} We track DATM synthesis for ConvNet-D4 (left) and ResNet-18 (right) at IPC 50. Both TM loss (\textcolor{red}{red} curve) and accuracy (\textbf{black} curve) show minimal change -- loss varies only in the 3rd decimal place, and accuracy improves by just $2 - 3\%$ over its initial accuracy. \textbf{(c) Longer Synthesis with DATM loss.} We ascertain if the training dynamics of DATM loss presented in (b) is because of slower convergence. For this, we perform $5 \times$ longer synthesis with the DATM loss than in (b) to check for convergence, and observe no observable trend (increasing Acc or decereasing Loss) in the curves, validating our observation that the TM loss objective indeed does not scale well to larger settings.}
    \label{fig:tm_supple}
\end{figure}

\subsection{Small-scale methods}

\noindent \textbf{TM \cite{cazenavette2022datasetdistillationmatchingtraining}.} In this section, we further examine the behavior of the TM loss objective on TinyImageNet to further validate our observations discussed in Sec. 4.1 of the main paper. For this, we use two model architectures: ConvNet-D4 and ResNet-18. We use the DATM \cite{guo2024losslessdatasetdistillationdifficultyaligned} variant of TM loss for analysis, which sets a lower bound $T^-$ along with an upper bound $T^+$ on the sampling range for the starting checkpoint for matching trajectories. Recall from \cref{subsec:small-scale-related} of the supplementary that the TM loss has two additional hyperparameters $M$ and $N$. For our loss objective analysis experiments, we set $N=80$ and $M=2$.
For both architectures, we visualize two things: (1) The first shows the TM loss variation as a function of the starting expert checkpoint. To obtain the final TM loss value, we average the loss values across $T^-$ and $T^+$ epochs ($T^-=20$ and $T^+=35$ here); (2) we also make a scatter plot of these averaged loss values against the In-Distribution (ID) accuracy of the considered subsets for both IPC $10$ and IPC $50$ (specifically, we consider the six methods discussed in Sec. 3.2 of the main paper).

From the \cref{fig:tm_supple}(a), one can observe that for small models like ConvNet-D4, there is a reasonable variation in the TM Loss for different subsets, which reflect the different generalization performances of those sets. However, for more deeper and larger architectures like ResNet-18, the loss exhibits almost no discernible variation, with its value remaining nearly constant at around $0.805$ for all the considered subsets. Further, as shown in \cref{fig:dcs_tm}(a) of the main paper, the scatter plot of averaged TM Loss for various methods reveal no correlation with in-domain generalization performance of these subsets.

To further ascertain the above observations, we perform the actual synthesis process using the DATM method on TinyImageNet for both ConvNet-D4 and ResNet-18, and track its loss training dynamics. For training, we set N=20 and M=2, based on the maximum available GPU memory for ResNet-18. We include the plots for IPC=50 (\cref{fig:tm_supple}(b)) in the supplementary for completeness. The plots for IPC=10 were included in \cref{subsec:eval_using_dcs} of the main paper.

In these plots, we track both the ``Grand Loss" (TM matching loss across multiple trajectores) as well as the accuracy of the distilled set as one synthesizes for more iterations. Note that despite $10k$ synthesis iterations, one can hardly observe any significant change in both loss and accuracy, with loss value changing in 3rd decimal place and accuracy improving only by $2 - 3\%$ over the base accuracy, which is already relatively high due to initialization of the synthesis set with correctly classified real samples. To confirm that this is not a convergence issue, we also perform a $5 \times$ longer synthesis with the same loss (i.e., $50k$ iterations), and plot its loss and accuracy trend in \cref{fig:tm_supple}(c). The lack of any consistent trend in the accuracy confirms our observation that using TM-based loss objective for synthesis on larger models and datasets is not useful, as it's unable to capture any meaningful changes in the model paramters relevant for downstream generalization. 



\vspace{0.1cm}

\noindent \textbf{DM \cite{zhao2022datasetcondensationdistributionmatching}.} We evaluate the distribution-matching (DM) objective using DCS on the ResNet-18 model with ImageNet-1K (IN1K) dataset, and plot its correlation in \cref{fig:supple_dcs_dm}(a). We find that, although DM has a  much better correlation than TM with downstream generalization, it's still a modest correlation ($\rho = 0.41$) and not very strong. Moreover, we also observe that DM exhibits greater variation in both loss and downstream performance than TM during the synthesis stage.

\begin{table}
\centering
\caption{\textbf{TinyImageNet Cross-Architecture Transfer performance comparison of small-scale DD methods with coresets in HL and SL setting}. Model architecture is ConvNet-D4. The substantial performance gap in the HL setting closes when trained with fixed soft labels.}
\setlength{\tabcolsep}{6pt}
\resizebox{0.98\linewidth}{!}{
\label{tab:small_scale_transfer}
\begin{tabular}{l|cc|cc}
\toprule
\multicolumn{1}{c|}{\multirow{2}{*}{\textbf{Method}}} & \multicolumn{2}{c|}{\textbf{Avg. Transfer (HL)}} & \multicolumn{2}{c}{\textbf{Avg. Transfer (SL)}}                                   \\
\cmidrule{2-5}
\multicolumn{1}{c|}{}                                 & \textbf{IPC 10}    & \textbf{IPC 50}    & \multicolumn{1}{c}{\textbf{IPC 10}} & \multicolumn{1}{c}{\textbf{IPC 50}} \\
\midrule
\multicolumn{5}{c}{\textbf{Dataset Distillation}} \\
\midrule
DM  \cite{zhao2022datasetcondensationdistributionmatching}                                                 & 2.96   & 6.71    & 5.03                     & 19.64                     \\
DC  \cite{zhao2021datasetcondensationgradientmatching}                                                 & 3.60    & 4.41    & 2.49                      & 3.48                     \\
TM \cite{cazenavette2022datasetdistillationmatchingtraining}                                                  & \textbf{5.50}    & \textbf{10.26}    & \textbf{10.27}                     & 20.32                     \\
DATM  \cite{guo2024losslessdatasetdistillationdifficultyaligned}                                               & 5.22              & 10.12               & 8.93                     & \textbf{23.83}                     \\
\midrule
\multicolumn{5}{c}{\textbf{Coreset Selection}} \\
\midrule
Random Real                                          & 2.49     & 6.17    & 9.28                     & 22.91                     \\
K-centers                                               & 3.91    & 7.80    & \textbf{10.51}                     & \textbf{23.53}                   \\
\bottomrule
\end{tabular}}
\end{table}

\subsection{Large-scale methods: SRe2L\cite{yin2024squeezerecoverrelabeldataset} and DWA\cite{du2024diversitydrivensynthesisenhancingdataset}}
\label{subsec:dcs_large_scale}

We employ DCS to evaluate two large-scale distillation objectives: SRe2L \cite{yin2024squeezerecoverrelabeldataset} and DWA \cite{du2024diversitydrivensynthesisenhancingdataset}, which match BatchNorm statistics of a teacher model as a proxy for distribution alignment. A more detailed description of these loss objectives along with their equation is included in Sec-\ref{sec:related} of the supplementary material. We conduct this analysis across seven different IPC values (10 -- 700) with a diverse set of data subsets $\mathcal{S}_j$ coming from four coreset methods (EL2N~\cite{paul2021deep}, EL2N-pareto fractions (see Sec. 3.1 of the main paper), CAL~\cite{margatina2021active} and GraphCut~\cite{pmlr-v132-iyer21a}). For each IPC, we intentionally exclude extremely poor-performing subsets to ensure that the evaluation focuses on whether the loss objectives can meaningfully differentiate among the strongest candidate subsets at that IPC. In addition to these plausible subsets, we also include \textit{adversarial degenerate subsets} by repeating the elements of the subset twice, allowing us to test whether the loss objectives can correctly identify these pathological cases wherein, repetition of the data results in redundancy and a good distillation loss objective should penalize that in favour of diversity and good distribution coverage. All of these subset variants combine to 158 data points per objective for correlation evaluation. 
We choose ResNet-18 and ImageNet-1K dataset as the evaluation benchmark and report Spearman correlation coefficient for each objective after accounting for the confounding effect of subset size, which can otherwise bias the correlation estimates.

The results are shown as scatter plots in \cref{fig:sre2l_loss_dcs}. 
From the figure, it is evident that there is a significant mis / non-alignment between the distillation objectives and generalization performance, with SRe2L and DWA exhibiting \textit{no correlation}, indicating that lower loss values in these objectives have no effect on better downstream generalization.

\subsection{Robustness of DCS}
To evaluate whether DCS remains informative beyond in-distribution settings, we provide additional results on the out-of-distribution (OOD) generalization task. Specifically, we compute DCS for SRe2L on Mini-ImageNet-C and present the corresponding results in \cref{fig:supple_dcs_dm}(b). From the figure, it is evident that there is a significant misalignment between 
the the SRe2L objective and OOD generalization performance as well, which is captured accurately by DCS ($\rho = -0.12$).


 This finding is further corroborated by the poor OOD performance of SRe2L at IPC 50, which achieves only $3.56\%$ accuracy compared to $11.72\%$ for a class-balanced random subset. These results suggest that DCS serves as a reliable indicator of dataset quality across both in-distribution and out-of-distribution evaluation regimes, making it a broadly applicable metric for assessing DD methods.

\section{Additional results on the proposed method}

As shown in the main paper, soft-label regimes (SL and SL+KD) suffer from the undesirable property that \textit{data quality ceases to matter}, limiting the value of quality pruning metrics. Therefore, we focus our additional analyses on the HL regime, where sample quality remains influential, and evaluate both CAD-Prune and CA2D in greater depth.

\subsection{Performance comparison with recent coreset and DD methods}
\label{subsec:comparison_with_latest_methods}
\noindent \textbf{Large-scale methods.} We provide comparisons of our proposed DD method (CA2D) with some of the recently proposed large-scale DD techniques \cite{fadrm_2025} for completeness\footnote{Note that we only consider papers which have been peer-reviewed and published at major AI conference venues.}. Note that these methods may have used potentially different evaluation settings to assess the performance of their distilled sets, including evaluation under a different label regime, or using different training hyper-parameters like no. of epochs for student training. Therefore, we take their publicly available distilled sets and evaluate them under the HL regime in our evaluation setup to maintain a fair comparison. The results for the DD method comparison are shown in \cref{tab:new_sota_compare}, where it is evident that the proposed technique CA2D outpeforms its baselines on IPC 50 \footnote{Note that we only consider IPC 50 for comparison due to unavailability of distilled sets for \citet{fadrm_2025} for any other IPC.}. 

We also compare CA2D and CAD-Prune with some latest state-of-the-art coreset methods such as Dyn-Unc \cite{dyn_unc} and DUAL \cite{pmlr-v267-cho25e}, and report the results in \cref{tab:rebuttal_table_in1k}, where we can observe on-par or better performance of the proposed methods against these baselines.

\vspace{0.1cm}

\noindent \textbf{Small-scale methods.} Small-scale result comparison of the proposed methods with existing popular baselines are provided in \cref{tab:rebuttal_table_ca2d_small_scale} (setting: ConvD3 arch. + CIFAR-100). As expected, due to the very low-resolution nature of the images in CIFAR-100 dataset, the performance is expectedly weaker.

\begin{table}
\centering
\caption{\textbf{Performance comparison of proposed methods with recently proposed DD techniques.} We add comparison against FAD-RM \cite{fadrm_2025} here for completeness, demonstrating that CA2D outperforms it and other DD baselines at IPC 50. See \cref{subsec:comparison_with_latest_methods} for the discussion.}
\setlength{\tabcolsep}{2pt}
\resizebox{\linewidth}{!}{
\label{tab:new_sota_compare}
\begin{tabular}{l|ccccc}
\toprule
\textbf{IN1K + HL} & SRe2L & D4M   & RDED & FAD-RM \cite{fadrm_2025} & CA2D (Ours)  \\
\midrule
\textbf{IPC 50}     & 9.79  & 21.56 & 38.49 & 21.21  &  \textbf{41.56} \\
\bottomrule
\end{tabular}}
\end{table}

\begin{table}[]
\caption{\textbf{Performance comparison of proposed methods with recently proposed coreset techniques.} We add comparison against Dyn-Unc \cite{dyn_unc} and DUAL \cite{pmlr-v267-cho25e} here for completeness, demonstrating that the proposed methods performs on-par or better compared to these baselines. See \cref{subsec:comparison_with_latest_methods} for discussion.}
\setlength{\tabcolsep}{6pt}
\resizebox{0.98\linewidth}{!}{
\label{tab:rebuttal_table_in1k}
\centering
\setlength{\tabcolsep}{4pt}
\begin{tabular}{lccccl}
\toprule
\multicolumn{1}{l}{\textbf{Method}} & \multicolumn{1}{c}{\textbf{IPC 10}} & \multicolumn{1}{c}{\textbf{IPC 50}} & \multicolumn{1}{c}{\textbf{IPC 100}} &  \\
\midrule
Dyn-Unc \cite{dyn_unc}             & 12.15 \std{0.22}                       & 33.03 \std{0.02}                      & 42.01 \std{0.10}                       \\
DUAL \cite{pmlr-v267-cho25e}                  & \textbf{15.65 \std{0.17}}              & 38.28 \std{0.09}                        & 46.22 \std{0.10}                         \\
CAD-Prune (Ours)                    & 12.57 \std{0.03}                        & 40.21 \std{0.15}                       & \textbf{47.40 \std{0.20}}               \\
CA2D (Ours)                         & \textbf{15.25 \std{0.24}}                       & \textbf{41.72 \std{0.40}}              & 46.32 \std{0.10}                       \\
\bottomrule
\vspace{-2.0mm}
\end{tabular}}
\end{table}
\begin{table}[]
\caption{\textbf{Performance comparison of proposed methods with SOTA samll-scale DD and coreset techniques on CIFAR-100 in the HL setting.} Model architecture is ConvNet-D3. Due to the very low-resolution nature of the setup (CIFAR-100), performance of the propoesed techniques is expectedly lower.}
\setlength{\tabcolsep}{10pt}
\resizebox{0.98\linewidth}{!}{
\label{tab:rebuttal_table_ca2d_small_scale}
\centering
\setlength{\tabcolsep}{12pt}
\begin{tabular}{lcc}
\toprule
\multicolumn{1}{l}{\textbf{Method}} & \multirow{1}{*}{\textbf{IPC 10}}                   & \multirow{1}{*}{\textbf{IPC 50}}                   \\
\midrule
TM \cite{cazenavette2022datasetdistillationmatchingtraining}                                                  & \textbf{38.18 \std{0.42}}                                & \textbf{46.32 \std{0.26}}                      \\
CA2D (factor=1)                                         & 20.95 \std{0.15}                                & 34.37 \std{0.11}                                \\
\midrule
Random Real & 18.64 \std{0.25} & 34.66 \std{0.41} \\
K-centers & \textbf{25.04 \std{0.30}} & \textbf{38.64 \std{0.43}} \\
CAD-Prune                                    & 22.95 \std{0.03} & 37.87 \std{0.16} \\
\bottomrule
\end{tabular}}
\end{table}

\subsection{Cross-Architecture Transfer}
We perform subset synthesis / selection using ResNet-18 model architecture and then train a ResNet-50 / ResNet-101 model on the obtained subset. Results are presented in \cref{tab:cross_arch_in1k_method} for both IPC 50 and 100. The proposed compute-aware pruning method CAD-Prune matches the performance of the compute-intensive sliding window method EL2N-Best \cite{paul2021deep,lee2024selmatcheffectivelyscalingdataset} on larger architectures and across IPC values while being more efficient. The proposed DD method CA2D also outperforms RDED \cite{sun2024diversityrealismdistilleddataset} across all settings, demonstrating the benefit of a optimal set selection to construct the distilled set.

\subsection{Convergence Analysis with longer training}
The goal of dataset distillation is to compress a large dataset into a small synthetic one on which a student model trained under a chosen label regime (HL / SL / SL+KD) can still achieve strong downstream performance. We observe a potential artifact of this compression process on the performance of the distilled sets: \textit{Compressing rich feature information into fewer synthetic samples proportionately increases the optimization effort required to extract them in the first place}. We term this as \textit{compression-extraction trade-off} in dataset distillation. 

\begin{figure}
    \centering
    \includegraphics[width=\linewidth]{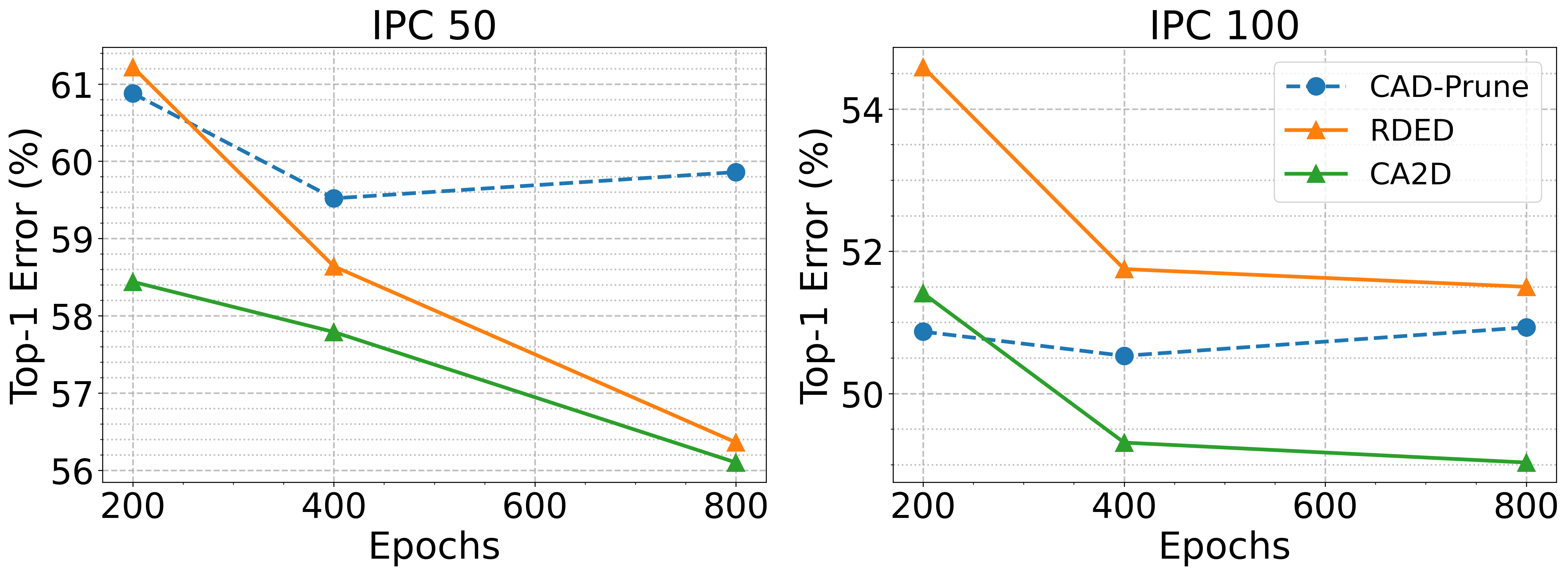}    \caption{\textbf{Convergence analysis of DD methods vs coresets.} We plot downstream student performance (Top-1 Error) as a function of training epochs. One can observe that performance keeps improving for distilled sets (solid line) with longer training, while it saturates for coresets (dashed line), indicating the existence of \textit{compression-extraction trade-off} in training on distilled set.}
    \label{fig:in1k_convergence}
\end{figure}

\begin{table}
\centering
\caption{\textbf{ImageNet-1K Cross-Arch. Transfer performance of proposed methods CAD-Prune and CA2D in HL setting.} Architecture used to perform selection / synthesis is ResNet-18. Student training is for 200 epochs. The proposed compute-aware pruning method CAD-Prune matches the expensive sliding window selection method EL2N-Best while being much more efficient. Similarly, the proposed CA2D built using CAD-Prune outperforms RDED across IPC and architectures. RN50=ResNet-50, RN101=ResNet-101.}
\vspace{0.1cm}
\setlength{\tabcolsep}{8pt}
\resizebox{\linewidth}{!}{
\label{tab:cross_arch_in1k_method}
\begin{tabular}{lccccc}
\toprule
\multicolumn{1}{c}{\multirow{2}{*}{\textbf{Method}}} & \multicolumn{2}{c}{\textbf{IPC 50}}                                            & \multicolumn{2}{c}{\textbf{IPC 100}}                                           \\
\cmidrule{2-5}
\multicolumn{1}{c}{}                                 & \multicolumn{1}{c}{\textbf{RN50}} & \multicolumn{1}{c}{\textbf{RN101}} & \multicolumn{1}{c}{\textbf{RN50}} & \multicolumn{1}{c}{\textbf{RN101}} \\
\midrule
\multicolumn{5}{c}{\textbf{Coreset Selection (224 $\times$ 224)}}  \\
\midrule
Random Real                                          & 32.70                                 & 31.97                                  & 45.20             & 47.01              \\
EL2N-Best                                            & \textbf{40.12}                                 & \textbf{41.18}                                  & 51.24             & \textbf{52.11}              \\
CAD-Prune (Ours)                                     & \textbf{40.37}                                 & \textbf{41.34}                                  & \textbf{52.00}                                    & 49.94                                     \\
\midrule
\multicolumn{5}{c}{\textbf{Dataset Distillation (224 $\times$ 224)}}  \\
\midrule
RDED                                                 & 37.64                & 36.60                 & 47.71                                    & 49.11                                     \\
CA2D (Ours)                                          & \textbf{38.15}                & \textbf{42.16}                 & \textbf{50.41}                                    & \textbf{52.47}     \\
\bottomrule
\end{tabular}}
\end{table}

We demonstrate this in \cref{fig:in1k_convergence}, where we show that, unlike coreset methods (like the proposed CAD-Prune), where performance saturation happens with longer training (indicating convergence in the HL regime), distilled sets like RDED and CA2D continue to improve with extended training, especially for smaller IPC settings, indicating that new information is still being extracted from the distilled samples. This analysis provides two key observations pertinent to dataset distillation research, where there exists an inherent trade-off in compression vs information extraction from a synthesized dataset: 
\begin{itemize}
    \item A certain minimum amount of compute budget is necessary to extract full information from a compressed set, and
    \item This extraction process is atleast $2 \times$ slower in comparison to coresets, where the underlying samples are from the real data distribution with no additional compression.
\end{itemize}


\subsection{Results on Higher IPC settings}
Although evaluation results in dataset distillation is reported at typical IPC values of 10-100, coreset methods report performance values across the IPC range, focusing on the lossless setting which lean towards higher IPC values. To validate the robustness of our proposed compute-aware pruning method CAD-Prune in such IPC ranges, we evaluate it on IPC values of 200 and 500, and report its performance in \cref{tab:abl_high_ipc}. One can observe that the selection mechanism in CAD-Prune clearly maintains its effectiveness in choosing the best scoring subset for the given downstream compute budget by matching the best results obtainable for that IPC value (exhaustive sliding window approach), as shown by the EL2N-Best results.

\subsection{Choice of the scoring checkpoint}
Recall that the main hypothesis behind our proposed CAD-Prune pruning method is that, given a downstream compute budget for training a student model (e.g., 200 epochs on IPC 50), we select scores from a full-dataset training checkpoint whose training duration is compute-aligned with the given downstream budget. Doing so ensures we pick samples that are actually being actively learned when trained for the given budget. To validate that such a compute-aligned checkpoint selection is optimal for a given budget, we perform an ablation, wherein, we select two checkpoints: (1) early checkpoint (matching downstream compute budget) from a longer training, and (2) final model with compute-aligned full-training on the full dataset; and select coresets based on the scores obtained from these checkpoints, and train a downstream model and evaluate it. The results are reported in \cref{tab:abl_scoring_ckpt}, where we can find that compute-aware (full-training) checkpoint performs much better compared to the early chosen checkpoint from a longer training run, and it is on-par / slightly better than the exhaustive sliding-window approach of finding the best score range for a given IPC (EL2N-Best).

\begin{table}
\centering
\caption{\textbf{Performance comparison of proposed pruning method CAD-Prune with EL2N-Best at high IPC settings}. We demonstrate that the proposed compute-aware pruning method CAD-Prune works on-par (or better) w.r.t a compute-heavy sliding window method of ``EL2N-Best'' on IPC values beyond the ranges evaluated in DD literature (and more typical of coreset literature).}
\vspace{0.1cm}
\setlength{\tabcolsep}{18pt}
\resizebox{\linewidth}{!}{
\label{tab:abl_high_ipc}
\begin{tabular}{lll}
\toprule
\multicolumn{1}{c}{\textbf{Method}} & \multicolumn{1}{c}{\textbf{IPC 200}} & \multicolumn{1}{c}{\textbf{IPC 500}} \\
\midrule
EL2N-Best                           & 56.74 \std{0.03}                       & 64.07 \std{0.04}                       \\
CAD-Prune (Ours)                           & \textbf{57.07 \std{0.04}}                       & \textbf{65.46 \std{0.19}}     \\                 
\bottomrule
\end{tabular}}
\end{table}
\begin{table}
\centering
\caption{\textbf{Performance comparison of coresets with different scoring checkpoints for pruning}. Coresets are chosen based on scores from different checkpoints. Depending on a downstream compute budget (e.g., 200 epochs of IPC 50), choosing a compute-aligned training checkpoint (``Compute-aware'') performs much superior to early checkpoint of a longer training.}
\vspace{0.1cm}
\setlength{\tabcolsep}{4pt}
\resizebox{\linewidth}{!}{
\label{tab:abl_scoring_ckpt}
\begin{tabular}{lll}
\toprule
\multicolumn{1}{l}{\textbf{Scoring checkpoint}} & \multicolumn{1}{l}{\textbf{IPC 50}} & \multicolumn{1}{l}{\textbf{IPC 100}} \\
\midrule
Early of full-training                          & 30.95 \std{0.35}                      & 39.94 \std{0.25}                       \\
Compute-aware                       & \textbf{40.21 \std{0.15} \textcolor{ForestGreen}{(+ 9.3)}}                       & \textbf{47.40 \std{0.20} \textcolor{ForestGreen}{(+ 7.5)}}                       \\
\midrule
EL2N-Best & 38.44 \std{0.30} & \textbf{47.76 \std{0.43}} \\
\bottomrule
\end{tabular}}
\end{table}


\section{Image Visualization}
Finally, we provide visualization of synthetic / selected images of various methods considered in the work for qualitative comparison, including our proposed pruning method CAD-Prune and the proposed DD method CA2D. They are shown in \cref{fig:plot_monastery_663} -- \cref{fig:plot_garbage_truck_569} for different classes of ImageNet-1K. Each row corresponds to one coreset / DD method, and we display five images per method for a particular class.

\begin{figure*}
\centering
\includegraphics[width=0.75\linewidth]{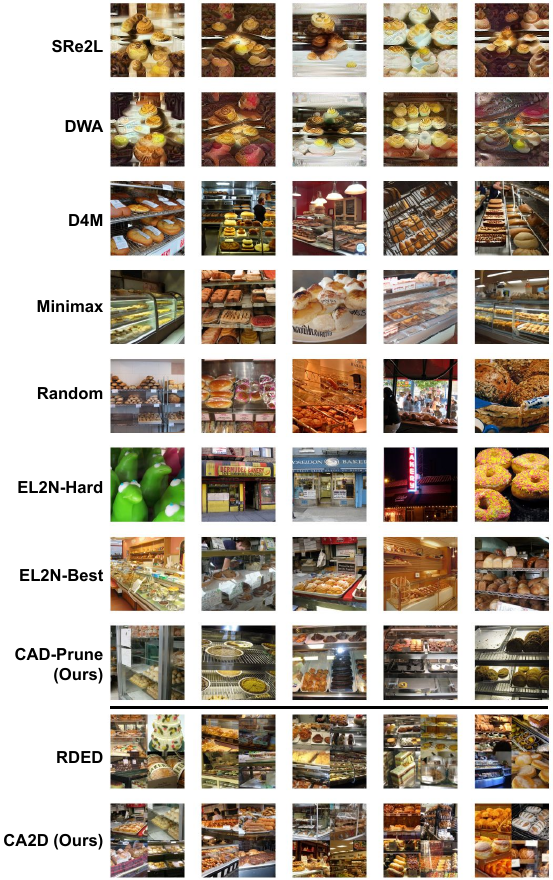}
\caption{\textbf{Class: Bakery} in ImageNet-1K.}
\label{fig:plot_monastery_663}
\end{figure*}

\begin{figure*}
\centering
\includegraphics[width=0.75\linewidth]{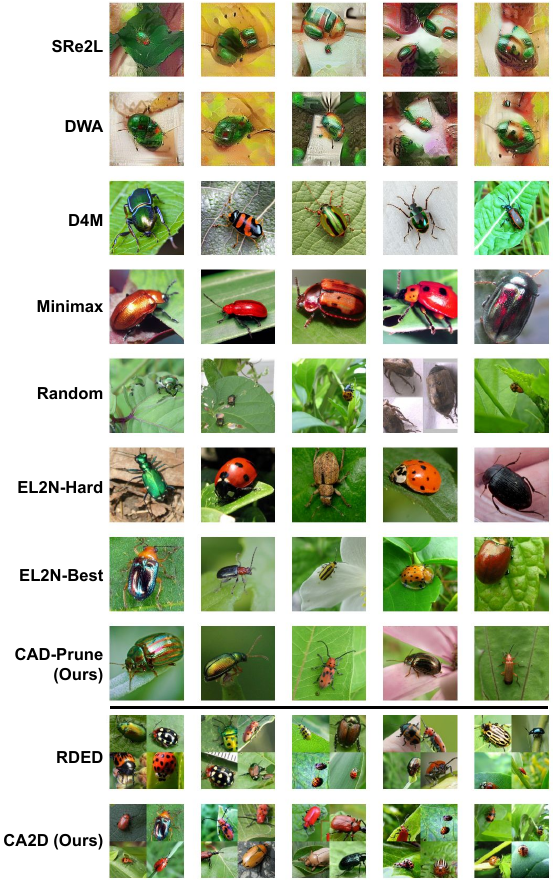}
\caption{\textbf{Class: Leaf Beetle} in ImageNet-1K.}
\label{fig:plot_leaf_beetle_304}
\end{figure*}

\begin{figure*}
\centering
\includegraphics[width=0.75\linewidth]{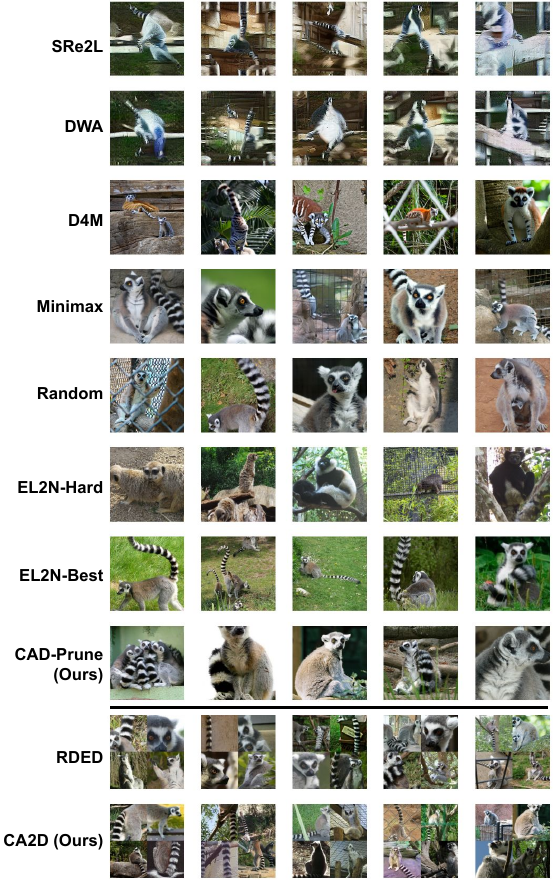}
\caption{\textbf{Class: Madagascar Cat} in ImageNet-1K.}
\label{fig:plot_Madagascar_cat_383}
\end{figure*}

\begin{figure*}
\centering
\includegraphics[width=0.75\linewidth]{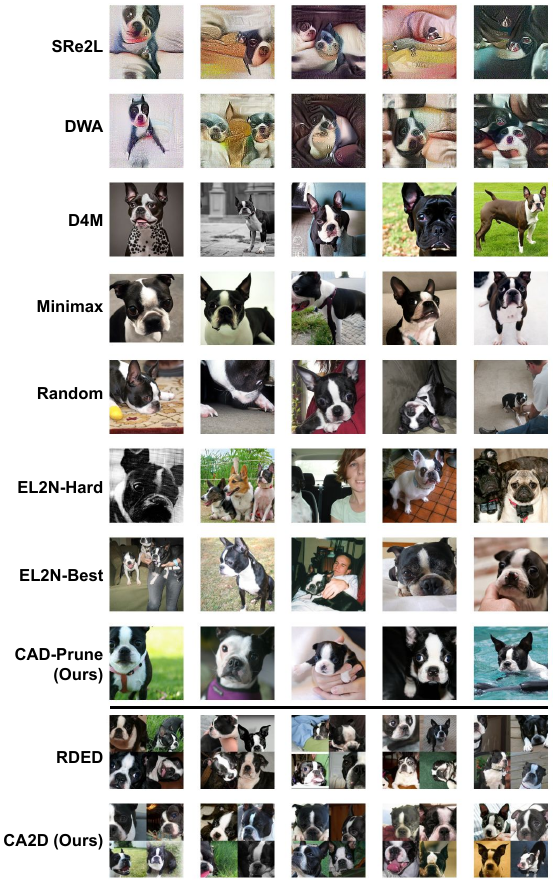}
\caption{\textbf{Class: Boston Bull} in ImageNet-1K.}
\label{fig:plot_Boston_bull_195}
\end{figure*}

\begin{figure*}
\centering
\includegraphics[width=0.75\linewidth]{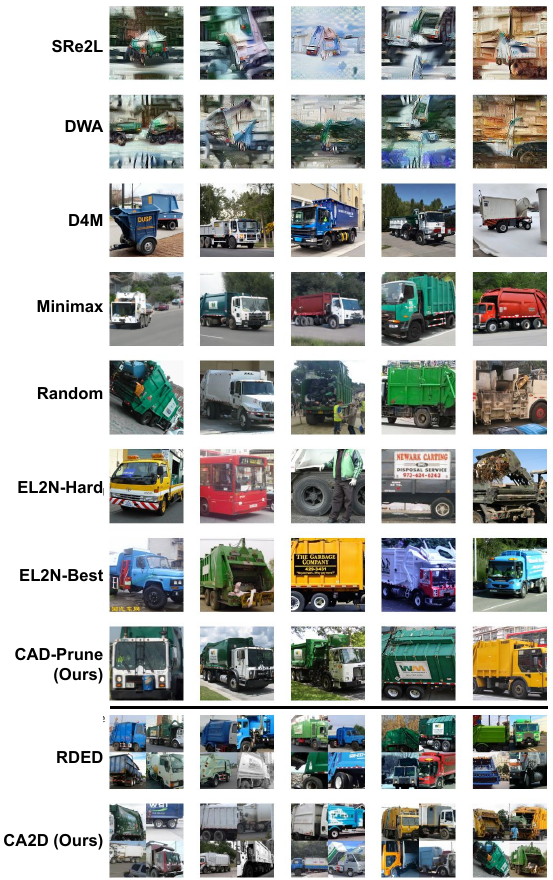}
\caption{\textbf{Class: Garbage Truck} in ImageNet-1K.}
\label{fig:plot_garbage_truck_569}
\end{figure*}

\clearpage


\end{document}